\newtheorem{theorem}{Theorem}
\newtheorem{example}[theorem]{Example}
\newtheorem{proposition}[theorem]{Proposition}
\newtheorem{lemma}[theorem]{Lemma}
\newtheorem{definition}[theorem]{Definition}
\DeclareMathOperator*{\argmax}{argmax}
\DeclareMathOperator*{\eqdef}{\stackrel{\text{\tiny def}}{=}}
\newcommand\numberthis{\addtocounter{equation}{1}\tag{\theequation}}
\newcommand{\BibTeX}{\rm B\kern-.05em{\sc i\kern-.025em b}\kern-.08em\TeX}
\title{\Large{A View of the Certainty-Equivalence Method for PAC RL as an Application of the Trajectory Tree Method}}
\author{
Shivaram Kalyanakrishnan \\
Department of Computer Science \& Engineering\\
Indian Institute of Technology Bombay\\
Mumbai, 400076 \\
\texttt{shivaram@cse.iitb.ac.in} 
\and
Sheel Shah \\
Department of Electrical Engineering\\
Indian Institute of Technology Bombay\\
Mumbai, 400076 \\
\texttt{19D070052@iitb.ac.in}
\and
Santhosh Kumar Guguloth \\
Department of Computer Science \& Engineering\\
Indian Institute of Technology Bombay\\
Mumbai, 400076 \\
\texttt{santhoshkg@iitb.ac.in}
}
\begin{document}


\pagestyle{fancy}
\fancyhead{}

\maketitle 
\begin{abstract}
Reinforcement learning (RL) enables an agent interacting with an unknown MDP $M$ to optimise its behaviour by observing transitions sampled from $M$. A natural entity that emerges in the agent's reasoning is $\widehat{M}$, the maximum likelihood estimate of $M$ based on the observed transitions. The well-known \textit{certainty-equivalence} method (CEM) dictates that the agent update its behaviour to $\widehat{\pi}$, which is an optimal policy for $\widehat{M}$. Not only is CEM intuitive, it has been shown to enjoy minimax-optimal sample complexity in some regions of the parameter space for PAC RL with a generative model~\citep{Agarwal2020GenModel}.

A seemingly unrelated algorithm is the ``trajectory tree method'' (TTM)~\citep{Kearns+MN:1999}, originally developed for efficient decision-time planning in large POMDPs. This paper presents a theoretical investigation that stems from the surprising finding that CEM may indeed be viewed as an application of TTM. The qualitative benefits of this view are (1) new and simple proofs of sample complexity upper bounds for CEM, in fact under a (2) weaker assumption on the rewards than is prevalent in the current literature. Our analysis applies to both non-stationary and stationary MDPs. Quantitatively, we obtain (3) improvements in the sample-complexity upper bounds for CEM both for non-stationary and stationary MDPs, in the regime that the ``mistake probability'' $\delta$ is small. Additionally, we show (4) a lower bound on the sample complexity for finite-horizon MDPs, which establishes the minimax-optimality of our upper bound for non-stationary MDPs in the small-$\delta$ regime.\end{abstract}

\section{Introduction}
\label{sec:introduction}

The principle of \textit{certainty-equivalence} has been a recurring theme in the design of reinforcement learning (RL) algorithms~\citep{Azar2013GenModel,Agarwal2020GenModel}.
Concretely, consider an agent interacting with an unknown Markov Decision Problem (MDP) $M$. The agent gains information about $M$ by repeatedly querying a generative model with an arbitrary (state, action) pair or (state, action, time-step) triple, and it is provided an accordingly-sampled next state and reward. Based on this set of samples $D$, the agent must propose rewarding behaviour for $M$. The first step in applying the certainty-equivalence method (CEM) is to identify $\widehat{M}$, a maximum likelihood estimate of $M$ based on $D$ ($\widehat{M}$ is also called the ``empirical model''). The agent then computes a policy $\widehat{\pi}$ that is optimal for $\widehat{M}$. In other words, the agent computes the same behaviour as it would \textit{if it were certain} that $\widehat{M} = M$. The idea is intuitive since $\widehat{M}$ indeed approaches $M$ as $D$ grows larger. 

A natural question is whether CEM is \textit{optimal} in its sample complexity. A line of work that formalises the problem using the PAC framework has provided partially  affirmative answers, although gaps remain. If $M$ is a stationary MDP, the baseline for comparison has been a sample-complexity lower bound from \citet{Azar2013GenModel}. These authors also provide a sample-complexity upper bound for an iterative implementation of CEM. Their upper bound matches the lower bound when restrictions are placed on some problem parameters---the tolerance $\epsilon$ and the discount factor $\gamma$. In subsequent work, \citet{Agarwal2020GenModel} partially relax the restriction. Interestingly,
\citet{Li_OperationsResearch2023_SampleSizeBarrier} show that minimax-optimality is possible over the full range of problem parameters by injecting randomness into CEM (hence, technically, the resulting algorithm is \textit{not} CEM). They also provide an upper bound for CEM itself in the case that $M$ is a non-stationary MDP, adopting the convention of using a finite horizon $H$ in place of discount factor $\gamma$.
Although the preceding analyses~\citep{Azar2013GenModel,Agarwal2020GenModel,Li2020GenModel} vary in approach, they have a common technical core that uses bounds on the variance of the long-term return.

\subsection{Contribution}

In this paper, we provide an alternative perspective on CEM, which offers a new template for analysis and new upper bounds. 

\subsubsection{New analytical framework} We illustrate a connection between CEM and the seemingly-unrelated trajectory tree method (TTM), proposed by \citet{Kearns+MN:1999} for decision-time planning in large MDPs and POMDPs. A trajectory tree is designed to provide unbiased estimates of the value function of \textit{every} possible policy for the task. In TTM,  \citet{Kearns+MN:1999}
deliberately generate several \textit{independent} trajectory trees, so that confident estimates of value functions can be obtained by averaging. Our main insight is that CEM \textit{implicitly} performs the same kind of averaging. Consequently, we can reuse the proof structure accompanying TTM (summarised in Section~\ref{subsec:relatedwork}), only now using a variant of Hoeffding's inequality for a sum of \textit{dependent} random variables~\cite[see Section  5]{Hoeffding:1963}. Otherwise, we only need elementary probability and counting, setting up simple, intuitive proofs. We also obtain quantitative gains.

\subsubsection{Upper bound for non-stationary MDPs} The more straightforward case for us to analyse is when $M$ is \textit{non-stationary}: that is, its dynamics can change over time. Let CEM-NS denote the algorithm based on the certainty-equivalence principle for this setting. Under CEM-NS, the maximum likelihood MDP $\widehat{M}$ is also a non-stationary MDP, which estimates a separate transition probability distribution over next states for each (state, action, time-step) triple. If $M$ has a set of states $S$, a set of actions $A$, and horizon $H$, our analysis shows that CEM-NS requires $O\left(\frac{|S||A|H^{3}}{\epsilon^{2}} \log \frac{1}{\delta} + \frac{|S|^{2}|A|H^{4} \log|A|}{\epsilon^{2}}\right)$ samples, where tolerance $\epsilon$ and mistake probability $\delta$ are the usual PAC parameters (formally specified in the next section). This bound is in general incomparable with the $O\left(\frac{|S||A|H^{4}}{\epsilon^{2}} \log \frac{|S||A|H}{\delta}\right)$ upper bound shown recently by \citet{Li_OperationsResearch2023_SampleSizeBarrier}, and is tighter by a factor of $H$ in the regime of small $\delta$. Interestingly, our upper bound holds with a weaker assumption on the rewards (explained in Section~\ref{subsec:mdp}) than is common in the literature. We present the main elements of our analytical approach, situated in the context of the non-stationary setting, in Section~\ref{sec:ssw}. 


\subsubsection{Upper bound for stationary MDPs} If it is known that $M$ is a stationary MDP, then the certainty-equivalence principle would imply constructing a \textit{stationary} maximum likelihood MDP $\widehat{M}$ by pooling together all the samples for any (state, action) pair. Let CEM-S denote the algorithm that is consistent with this approach. In Section~\ref{sec:stationary}, we analyse CEM-S under the usual assumption that $M$ is infinite-horizon, with discount factor $\gamma < 1$.  A key technical difference emerges when we analyse CEM-S using the TTM toolkit. In the stationary setting, some trajectory trees---or equivalently, ``worlds'', as we shall denote them---use the \textit{same} sample transition at different time steps, and therefore no longer provide \textit{unbiased} value estimates of policies. We simply use the fact that such worlds constitute only a small fraction of the universe of worlds, and hence their influence is limited.

Our eventual sample-complexity upper bound for CEM-S is $\tilde{O}\left(\frac{|S||A|}{(1 - \gamma)^{3}\epsilon^{2}}\left(\log \frac{1}{\delta} + |S||A|\epsilon \right)\right)$, where $\tilde{O}$ suppresses factors that are logarithmic in $\frac{1}{\epsilon}$ and $\frac{1}{1 - \gamma}$. This upper bound matches the lower bound from \citet{Azar2013GenModel} in the regime of small $\delta$. By contrast, the upper bounds provided by \citet{Azar2013GenModel} and \citet{Agarwal2020GenModel} hold for all $\delta \in (0, 1)$, but unlike ours, apply only to restricted ranges of $\epsilon$.

\subsubsection{Lower bound for finite-horizon MDPs} As an independent contribution, we adapt the lower bound of \cite{Azar2013GenModel} to the finite horizon setting, showing that $\Omega\left(\frac{|S||A|H^{3}}{\epsilon^{2}} \log \frac{1}{\delta} \right)$ samples are necessary on some instances for any PAC algorithm in the finite-horizon setting. This result, presented in Section~\ref{sec:lowerbound}, establishes the new finding that within the small-$\delta$ regime, CEM is indeed a minimax-optimal algorithm for non-stationary MDPs.

In short, our paper furthers the understanding of CEM, a natural and intuitive algorithm, by bringing out its connection with TTM, itself a classical algorithm. Our analysis and results are significant to the theory of RL, which is a central paradigm for agent learning. Our work also motivates further analysis and algorithm design.
We begin with a formal problem statement (Section~\ref{sec:ps}) and a review of the relevant literature (Section~\ref{subsec:relatedwork}) before presenting our analysis.

\section{PAC RL: Problem Statement}
\label{sec:ps}


We formalise the requirement of PAC RL with a generative model.

\subsection{Markov Decision Problems}
\label{subsec:mdp}

We adopt a definition of MDPs that covers both stationary and non-stationary tasks, with both finite and infinite horizons. An MDP $M = (S, A, T, R, H, \gamma)$ comprises a set of states $S$ and a set of actions $A$. We assume $S$ and $A$ are finite. Positive integer $H$ (possibly infinite) denotes the task horizon; let $[H]$ denote the set $\{0, 1, 2, \dots, H - 1\}$.
The transition function $T: S \times A \times [H] \times S \to [0, 1]$ assigns a probability $T(s, a, t, s^{\prime})$ to change state from $s \in S$ to $s^{\prime} \in S$ by taking action $a \in A$ at time step $t \in [H]$; hence $\sum_{s^{\prime} \in S} T(s, a, t, s^{\prime}) = 1$ for $s \in S$, $a \in A$, $t \in [H]$. Taking action $a \in A$ from state $s \in S$ at time step $t \in H$ also earns a
numeric reward $R(s, a, t)$. Hence, an agent's interaction with the MDP is a sequence $s^{0}, a^{0}, r^{0}, s^{1}, a^{1}, r^{1}, \dots, s^{H - 1}, a^{H - 1}, r^{H - 1}, s^{H}$ wherein for time step $t \in [H]$, the agent (1) takes action $a^{t}$ from state $s^{t}$, (2) obtains reward $r^{t} = R(s^{t}, a^{t}, t)$, and (3) proceeds to state $s^{t + 1} \sim T(s^{t}, a^{t}, t)$, with the convention that $s^{H}$ is a terminal state. The discount factor $\gamma \in [0, 1]$ is used to compute long-term values; we permit $\gamma = 1$ only when $H$ is finite.

Previous work~\citep{Azar2013GenModel,Agarwal2020GenModel,Li2020GenModel} has typically assumed that \textit{each} reward comes from a known, bounded range (taken by convention as $[0, 1]$). However, we only enforce the weaker requirement that the discounted \textit{sum} of rewards $\sum_{t \in [H]} \gamma^{t} r^{t}$ lie in a known interval~\citep{JiangAgarwal_OpenProblem_CLOT2018}. For easy comparison with previous results, we take this interval as $[0, V_{\max}]$, where $V_{\max} \leq \min\left\{H, \frac{1}{1 - \gamma}\right\}$, as would follow if each reward is at most $1$. To simplify exposition, we assume
that the rewards are deterministic, and
that the reward function is known to the agent. Approximating a stochastic reward function $R$ from samples would not alter the asymptotic complexity of our upper bounds, as also observed by \citet{Agarwal2020GenModel}.






Let $\pi: S \times [H] \to A$ be a non-stationary policy for $M$. Its value function $V^{\pi}: S \times [H] \to \mathbb{R}$ specifies the expected long-term discounted reward for each $(s, t) \in S \times [H]$, and is given by 
\begin{align*}
V^{\pi}(s, t) &= R(s, \pi(s, t), t) + 
\gamma \sum_{s^{\prime} \in S} T(s, \pi(s, t), t, s^{\prime}) V^{\pi}(s^{\prime}, t + 1),\nonumber
\end{align*}
with the convention that $V^{\pi}(\cdot, H) \eqdef 0$. It is well-known that every MDP has an \textit{optimal} policy $\pi^{\star}: S \times [H] \to A$, which satisfies $V^{\pi^{\star}}(s, t) \geq V^{\pi}(s, t)$ for all $(s, t) \in S \times [H]$ and  $\pi: S \times [H] \to A$. The value function of $\pi^{\star}$ is denoted $V^{\star}$. We may assume $\pi^{\star}$ to be \textit{stationary} (that is, 
independent of time step $t \in [H]$) if $M$ is also stationary (that is, $T$ and $R$ do not depend on $t$) \textit{and} $H$ is infinite.

\subsection{Learning Algorithms}
\label{subsec:learningalgorithms}

When learning with a generative model, an algorithm $\mathcal{L}$ can repeatedly query arbitrary $(s, a, t) \in S \times A \times [H]$, and is returned $r = R(s, a, t)$, $s^{\prime} \sim T(s, a, t)$ by the environment. Hence, at any stage, the data $D$ available with the algorithm is the sequence of samples so gathered. Based on $D$, the algorithm may either pick a new tuple to query, or stop and return a policy.

In the PAC formulation, the other inputs to the learning algorithm are a tolerance parameter $\epsilon \in (0, V_{\max})$ and a mistake probability $\delta \in (0, 1)$. The policy $\pi$ returned by $\mathcal{L}$ is $\epsilon$-optimal if for all $s \in S$, $V^{\pi}(s, 0) \geq V^{\star}(s, 0) - \epsilon.$ We require that on every MDP $M$ it is run, $\mathcal{L}$ stop and return an $\epsilon$-optimal policy with probability at least $1 - \delta$. The \textit{sample complexity} of $\mathcal{L}$ on a run is the number of samples it has gathered before termination. In this paper, we restrict our attention to worst case sample-complexity upper bounds (across problem instances) for CEM. For simplicity, we assume that the algorithm  samples each $(s, a, t) \in S \times A \times [H]$ the same number of times $N$, where $N$ is a function of 
$|S|$, $|A|$, $H$, $\gamma$, $V_{max}$, $\epsilon$, and $\delta$. We seek upper bounds on $N$ to ensure the PAC guarantee.
\section{Related Work}
\label{subsec:relatedwork}
In this section, we review sample-complexity bounds for PAC RL, and provide a sketch of TTM.


\subsection{PAC RL with a Generative Model}
The original PAC formulation of RL was put forth by \citet{Fiechter1994PACRL}, who established that its sample complexity is polynomial in the problem parameters. \citet{Kearns1998PhasedQLearning} then demonstrated that model-free learning algorithms such as $Q$-learning can also achieve polynomial sample complexity. For a stationary, infinite-horizon MDP, the model size scales as $\Theta(|S|^{2}|A|)$, whereas Q-learning uses $\Theta(|S||A|)$ entries. Progress on PAC RL with a generative model has accelerated in the last decade, owing to the minimax-optimal bounds furnished by \citet{Azar2013GenModel}. For stationary, infinite-horizon tasks having $k$ (state, action) pairs, \citet{Azar2013GenModel} show a sample-complexity \textit{lower bound} of
$\Omega\Big(\frac{k}{(1 - \gamma)^{3} \epsilon^{2}} \log\frac{k}{\delta} \Big)$
for obtaining an $\epsilon$-approximation of the optimal action value function $Q^{\star}$.
They construct an MDP instance on which every PAC algorithm must incur at least the specified sample complexity. They also provide an upper bound (applicable to all MDPs), which is is ``minimax-optimal'' in the sense that there exists an MDP on which the lower and upper bounds match up to a constant factor.

The tools proposed by \citet{Azar2013GenModel} have been the basis for many subsequent investigations. The essential idea is to construct the empirical model $\widehat{M}$, and to compute an output policy by running value iteration (or policy iteration) on 
$\widehat{M}$ for a finite number of iterations. If $Q_{k}$ is the $k$-step action value function of the output policy on $\widehat{M}$, for $k \geq 1$, the analysis proceeds by inductively upper-bounding the difference between $Q_{k}$ and $Q^{\star}$. Although the original algorithms of \citet{Azar2013GenModel} estimate $Q^{\star}$ with minimax-optimal sample complexity, they do not automatically yield a near-optimal policy. Obtaining such a policy from the action value function would ordinarily
require scaling the sample complexity by $\frac{1}{1 - \gamma}$. A variance-reduction technique proposed by \citet{Sidford2018GenModel}, while different from CEM, directly yields a near-optimal policy without this additional complexity.
Yet, the minimax-optimal upper bounds given above do not apply to the entire range of $\epsilon \in (0, V_{\max})$. For instance, the upper bound given by \citet{Azar2013GenModel} only holds for 
$\epsilon \in (0, 1/ \sqrt{(1 - \gamma) |S| })$, and that of \citet{Sidford2018GenModel} only for $\epsilon \in (0, 1]$.
The most recent advance in this line of work is due to \citet{Agarwal2020GenModel}, who
show that CEM itself can deliver a near-optimal policy for stationary MDPs with minimax-optimal sample complexity, under the constraint that $\epsilon \leq \sqrt{1 / (1 - \gamma)}$. The main components of their 
analysis are bounds on the variance of the return (introduced by \citet{Azar2013GenModel}), and an intermediate MDP designed to break the dependence among samples used to construct the empirical model. In contrast to all these approaches, our analysis only relies on a version of Hoeffding's inequality~\citep{Hoeffding:1963}. We obtain an upper bound for the entire range of problem parameters, whose ratio to the lower bound approaches a logarithmic term as $\delta \to 0$ (while keeping other parameters fixed).


\citet{Li_OperationsResearch2023_SampleSizeBarrier} devise learning algorithms that are minimax-optimal for stationary MDPs for the entire range of parameters, including $\epsilon \in (0, 1 / (1 - \gamma))$ . A key feature of their algorithms is the careful use of randomness for perturbing rewards or action-selection probabilities. The statistical guarantees of these algorithms kick in as soon as the sample size reaches $\Theta (|S||A| / (1 - \gamma) )$, whereas the so-called ``sample barrier'' in the guarantees of \citet{Agarwal2020GenModel} is $\Theta(|S||A| / (1 - \gamma)^{2})$. \citet{Li_OperationsResearch2023_SampleSizeBarrier} do not need to randomise their algorithm for the non-stationary setting, and consequently it boils down to exactly CEM. Our upper bound for CEM in the non-stationary setting is tighter than theirs by a factor of $H$ in the regime of small $\delta$, although it can be looser for large $\delta$.

Suppose we wish to estimate the action-value for some $(s, a) \in S \times A$, and this state-action pair gives reward $r$ and transitions to a (random) next state $s^{\prime}$. If the horizon $H$ is finite, then the $H$-step action-value of $(s, a)$ depends only on the $(H - 1)$-step return from $s^{\prime}$. Since our problem does not require us to explicitly estimate $h$-step returns for $h < H$, we make no independent assumption on the range of the $h$-step returns. We allow rewards obtained after visiting $s^{\prime}$ to be arbitrarily large or small (possibly negative), provided the sum of the first $H$ rewards following $(s, a)$ is bounded in $[0, V_{\max}]$. This distinction between the ranges of $H$-step and $(H - 1)$-step rewards becomes inconsequential if $H$ is infinite, and we anyway have to estimate action-values at all states. Mainly focused on stationary, infinite-horizon MDPs, the previous literature~\citep{Azar2013GenModel,Agarwal2020GenModel,Li_OperationsResearch2023_SampleSizeBarrier} constructs concentration bounds by expressing the variance of the return from $(s, a)$ in terms of the variance of the return from $s^{\prime}$. We do not employ such a step. Rather, like in the analysis of TTM, we only apply Hoeffding's inequality to $H$-step returns.





\subsection{Trajectory Tree Method} In \textit{decision-time planning}~\citep{Kearns2002SparseSampling}, the aim is to identify, a near-optimal action to take from the agent's current state $s^{0}$, with a given probability. A \textit{trajectory tree}~\citep{Kearns+MN:1999} is a randomly-grown tree whose nodes correspond to states, starting with $s^{0}$ at the root. From any node $s^{t}$, $t \in [H]$, exactly one sample $s^{\prime} \sim T(s^{t}, a, t)$
is drawn for each possible action $a \in A$, giving rise to a child node $s^{\prime}$. This process results in a tree of size $|A|^{H}$ (but independent of $|S|$), as illustrated in Figure~\ref{fig:tt}. Each transition has an associated reward. In POMDPs, each node additionally stores a randomly generated \textit{observation}.

\begin{figure}[b]
\begin{center}
\begin{tikzpicture}[scale=0.3,
every edge/.style={
        draw,
        postaction={decorate,
                    decoration={markings,mark=at position 0.5 with {\arrow[scale=2]{>}}}
                   }
        },
every loop/.style={},
el/.style = {inner sep=2pt, align=left, sloped},
]
    \tikzset{state/.style={circle,draw=black, minimum size=7mm,thick}}
    \node[state] at (0, 0) (s01){$s^{0}$};
    \node[state] at (-5, -4.4) (s11){$s^{1}_{1}$};
    \node[state] at (5, -4.4) (s12){$s^{1}_{2}$};
    \node[state] at (-8, -8.8) (s21){$s^{2}_{11}$};
    \node[state] at (-2, -8.8) (s22){$s^{2}_{12}$};
    \node[state] at (2, -8.8) (s23){$s^{2}_{21}$};
    \node[state] at (8, -8.8) (s24){$s^{2}_{22}$};
    \path[->]
    (s01) edge [-, thick] node[left] {$a_{1}$}  (s11);
    \path[->]
    (s01) edge [-, thick] node[right] {$a_{2}$}  (s12);
    \path[->]
    (s11) edge [-, thick] node[left] {$a_{1}$}  (s21);
    \path[->]
    (s11) edge [-, thick] node[right] {$a_{2}$}  (s22);
    \path[->]
    (s12) edge [-, thick] node[left] {$a_{1}$}  (s23);
    \path[->]
    (s12) edge [-, thick] node[right] {$a_{2}$}  (s24);
  \end{tikzpicture}
\end{center}
\caption{Example of trajectory tree for horizon $H = 2$, with starting state $s^{0}$, and actions $a_{1}, a_{2}$. Rewards are not shown.}
\label{fig:tt}
\end{figure}
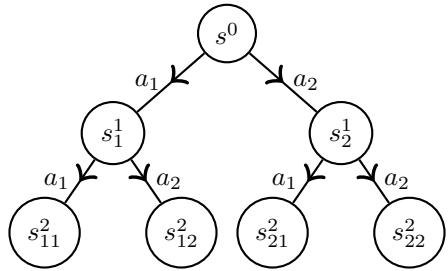

The rationale for building such a tree is that it can provide an unbiased estimate of the value of any arbitrary policy $\pi$ (possibly history-dependent), starting from $s^{0}$. Observe that applying the policy takes us through a trajectory (fixed actions, random next states) with the same probability as in the true MDP or POMDP. Hence, $V^{\pi}(s)$ can be estimated by growing some $m$ independent trajectory trees rooted at $s^{0}$, and averaging their value estimates. Crucially, the same $m$ trees can be used to evaluate every policy $\pi$ from the policy class $\Pi$ being considered (which can be arbitrary). If $\Pi$ is finite, then setting $m = O\left(\frac{V_{\max}}{\epsilon^{2}} \log \frac{|\Pi|}{\delta}\right)$ and selecting the \textit{empirically-best} policy guarantees $\epsilon$-optimality of the chosen action with probability at least $1 - \delta$. This is because, by Hoeffding's inequality, each policy is estimated $\Theta(\epsilon)$-accurately with probability
at least $1 - \frac{\delta}{|\Pi|}$~\cite[see Chapter 6]{Kakade2003PhDThesis}.

TTM essentially arises from a view of any MDP as a distribution over deterministic MDPs (each represented as a trajectory tree from the current state). This same view also facilitates variance reduction in policy search~\citep{Ng2000PEGASUS}.  To obtain bounds independent of $|S|$, \citet{Kearns+MN:1999} \textit{branch} from every action sequence. On the other hand, to analyse CEM, we are happy with bounds that depend on $|S|$. Correspondingly, we represent each deterministic MDP as a collection of samples, one for each (state, action, time-step) triple.
We call such a collection a ``world''.



\section{Non-Stationary MDPs}
\label{sec:ssw}

We present our main ideas
for the more general setting of non-stationary MDPs. First we summarise CEM in this setting.


\subsection{Certainty-Equivalence: CEM-NS Algorithm}
\label{subsec:ce}

If the underlying MDP $M = (S, A, T, R, H, \gamma)$ is known to be non-stationary, then so is its maximum likelihood estimate $\widehat{M}$. It is sufficient for our purposes to assume that $D$ contains the same number of samples, $N \geq 1$, for each tuple $(s, a, t) \in S \times A \times [H]$. Let $\text{count}(s, a, t, s^{\prime})$ denote the number of observed transitions of $(s, a, t)$ to $s^{\prime} \in S$. The empirical transition function $\widehat{T}$ is set to 
$$\widehat{T}(s, a, t, s^{\prime}) = \frac{\text{count}(s, a, t, s^{\prime})}{N}.$$
$\widehat{M} = (S, A, \widehat{T}, R, H, \gamma)$ is a maximum likelihood estimate of $M$ based on $D$. Let $V^{\star}_{\widehat{M}}: S \times [H] \to \mathbb{R}$ denote the optimal value function of $\widehat{M}$, and let $\widehat{\pi}: S \times [H] \to A$ be a corresponding optimal policy. 
$V^{\star}_{\widehat{M}}$ 
and $\widehat{\pi}$ are easily computed by 
dynamic programming. 
For $(s, t) \in S \times [H]$,
\begin{align}
&V^{\star}_{\widehat{M}}(s, t) = \max_{a \in A} \Big( R(s, a, t) + \gamma \sum_{s^{\prime} \in S} \widehat{T}(s, a, t, s^{\prime}) V^{\star}_{\widehat{M}}(s^{\prime}, t + 1) \Big);\\
&\widehat{\pi}(s, t) \in \argmax_{a \in A} \Big( R(s, a, t) + \gamma \sum_{s^{\prime} \in S} \widehat{T}(s, a, t, s^{\prime}) V^{\star}_{\widehat{M}}(s^{\prime}, t + 1) \Big).
\label{eq:barmbellmanns}
\end{align}
We denote by CEM-NS (``NS'' for ``non-stationary'') the algorithm that computes $\widehat{\pi}$ as its answer.

\subsection{Set of Worlds}
\label{subsec:sampledmdpsns}

The unknowns in $M$ are the transition probabilities for each $(s, a, t) \in S \times A \times [H]$. Hence the \textit{minimum} amount of information required to build a complete estimate of $M$ is exactly one transition for each $(s, a, t)$ tuple. In our notation, the resulting estimate would be $\widehat{M}$ with $N = 1$---a deterministic MDP that is a ``sample'' of $M$. This estimate would allow the agent to evaluate any arbitrary behaviour, albeit with significant error. The conventional view is that as more transitions are observed, they make the point estimate $\widehat{M}$ more accurate. In our complementary view, larger $N$ simply means more samples of $M$, each sample still an atomic (deterministic) MDP.

Recall that $D$ contains $N$ transitions for  each $(s, a, t) \in S \times A \times [H]$. Take $[N] \eqdef \{1, 2, \dots, N\}$, so each collected transition for $(s, a, t)$ is indexed by some number $i \in [N]$. Let $x \in X \eqdef [N]^{|S||A|H}$ be a string of length $|S||A|H$ on the alphabet $[N]$. We view $x$ as a code specifying a process to construct a deterministic MDP. The input to the process is the random data $D$; hence the resulting MDP $M_{x}$ is a random variable.
Concretely, $x$ picks out a particular transition from the $N$ collected in $D$ for each $(s, a, t)$ tuple. If $x(s, a, t) = i \in [N]$ for some $(s, a, t) \in S \times A \times [H]$, then the transition function $T_{x}$ of $M_{x}$ puts the entire transition probability from  $(s, a, t)$ on the state $s^{\prime} \in S$ observed in the $i$-th sample of $(s, a, t)$.

We refer to each $x \in X$ as a ``world'', defined by the code described above, and specifying a random deterministic MDP $M_{x} = (S, A, T_{x}, R, H, \gamma)$. Thus $X$ is the ``set of all worlds'', of size $N^{|S||A|H}$. For any \textit{fixed} $D$, the collection of $N^{|S||A|H}$ induced MDPs would generally be a multi-set, since multiple worlds $x \in X$ can induce the same MDP. Example~\ref{ex:D} illustrates the definition of $X$ and the process of sampling MDPs from $D$. A world is the semantic counterpart of a trajectory tree, since it allows for any policy to be evaluated. The syntactic difference is that a world associates a sample with every $(s, a, t) \in S \times A \times [H]$, whereas a trajectory tree associates a sample with each (state, action, state, action, $\dots$) sequence visited while constructing the tree.

.

\begin{example}
\label{ex:D}
Consider MDP $M$ with states $S = \{s_{0}, s_{1}\}$, actions $A = \{a_{0}, a_{1}\}$, and horizon $H = 3$. The table below describes a possible configuration of data $D$ resulting from sampling each (state, action, time-step) tuple $N = 3$ times.

\begin{center}
%
%
\setlength{\tabcolsep}{1.75pt}
\begin{tabular}{|c|c||c|c|c||c|c|c||c|c|c||c|c|c||}
\hline

\multicolumn{2}{|c|}{$s$} & $s_{0}$ & $s_{0}$ & $s_{0}$ & $s_{0}$ & $s_{0}$ & $s_{0}$ & $s_{1}$ & $s_{1}$ & $s_{1}$ & $s_{1}$ & $s_{1}$ & $s_{1}$ \\ \hline

\multicolumn{2}{|c|}{$a$} & $a_{0}$ & $a_{0}$ & $a_{0}$ & $a_{1}$ & $a_{1}$ & $a_{1}$ & $a_{0}$ & $a_{0}$ & $a_{0}$ & $a_{1}$ & $a_{1}$ & $a_{1}$ \\ \hline

\multicolumn{2}{|c|}{$t$} & $0$ & $1$ & $2$ & $0$ & $1$ & $2$ & $0$ & $1$ & $2$ & $0$ & $1$ & $2$ \\ \hline\hline

Samples & $i = 1$ & $s_{1}$ & $s_{1}$ & $s_{1}$ & $s_{1}$ & $s_{1}$ & $s_{1}$ & $s_{0}$ & $s_{1}$ & $s_{0}$ & $s_{1}$ & $s_{0}$ & $s_{0}$ \\ \cline{2-14} 

of & $i = 2$ & $s_{0}$ & $s_{0}$ & $s_{1}$ & $s_{0}$ & $s_{1}$ & $s_{1}$ & $s_{1}$ & $s_{1}$ & $s_{1}$ & $s_{0}$ & $s_{0}$ & $s_{1}$ \\ \cline{2-14}

$s^{\prime}$ & $i = 3$ & $s_{1}$ & $s_{0}$ & $s_{0}$ & $s_{1}$ & $s_{1}$ & $s_{1}$ & $s_{0}$ & $s_{1}$ & $s_{1}$ & $s_{1}$ & $s_{0}$ & $s_{1}$ \\ \hline

\end{tabular}

\end{center}

\noindent Each sample $i \in [N]$ contains the next state.
Each world is specified by a $12$-length string over the alphabet $\{1, 2, 3\}$. If we interpret this string in the sequence of the columns in the table, the world $x = 132121123211$ induces MDP $M_{x}$ with transition probabilities $T_{x}(s_{0}, a_{0}, 0, s_{1}) = 1$, $T_{x}(s_{0}, a_{0}, 1, s_{0}) = 1$, $T_{x}(s_{0}, a_{0}, 2, s_{1}) = 1$, and so on. Notice that $x^{\prime} = 122121123211$, which differs from $x$ only in its second position, would induce the same MDP since the second and third samples of $(s_{0}, a_{0}, 1)$ both lead to $s_{0}$. The total number of worlds is $3^{|S||A|H} = 531441$; for $D$ in our example the number of unique MDPs induced is $2^{8} = 256$, since only $8$ of the $12$ $(s, a, t)$ triples have samples with both possible next states. 

\end{example}

\subsection{Evaluating Policies on the Set of Worlds}
\label{subsec:eval}

The value of policy $\pi: S \times [H] \to A$ on MDP $M_{x}$ corresponding to world $x \in X$ is given by
\begin{align}
V^{\pi}_{x}(s, t) &= R(s, \pi(s, t), t) + \gamma \sum_{s^{\prime}} T_{x}(s, \pi(s, t), t, s^{\prime}) V^{\pi}_{x}(s^{\prime}, t + 1)
\label{eqn:ns-bellman}
\end{align}
for $(s, t) \in S \times [H]$. We note $V^{\pi}_{x}$ to be an unbiased estimator of $V^{\pi}$.
\begin{lemma}[Worlds provide unbiased estimates]
\label{lem:ns-exp}
For $x \in X$, $\pi: S \times [H] \to A$, and $(s, t) \in S \times [H]$:
$$\mathbb{E}[V^{\pi}_{x}(s, t)] = V^{\pi}(s, t).$$
\end{lemma}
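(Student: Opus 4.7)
The plan is a backward induction on the time step $t$, descending from $H$ down to $0$. The base case $t = H$ is immediate: by convention $V^{\pi}_{x}(\cdot, H) = 0 = V^{\pi}(\cdot, H)$, so both sides agree trivially.

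For the inductive step, assume the claim holds at time $t+1$, and fix any $(s, t) \in S \times [H]$. Starting from the per-world Bellman equation~(\ref{eqn:ns-bellman}) and taking expectation over $D$, the deterministic reward term $R(s, \pi(s,t), t)$ passes through untouched, and it remains to evaluate
\begin{align*}
\mathbb{E}\Bigl[\sum_{s^{\prime} \in S} T_{x}(s, \pi(s,t), t, s^{\prime})\, V^{\pi}_{x}(s^{\prime}, t+1)\Bigr].
\end{align*}
By linearity I move the sum outside. The key observation is that $T_{x}(s, \pi(s,t), t, s^{\prime})$ is the indicator that the $x(s,\pi(s,t),t)$-th sample collected at tuple $(s,\pi(s,t),t)$ equals $s^{\prime}$, and thus is a function \emph{only} of the samples at that one $(s,a,t)$-tuple. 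On the other hand, unrolling the recursion~(\ref{eqn:ns-bellman}) shows that $V^{\pi}_{x}(s^{\prime}, t+1)$ depends only on samples at tuples of the form $(\cdot, \cdot, t^{\prime})$ with $t^{\prime} \geq t+1$. Since samples collected at distinct $(s,a,t)$-tuples in $D$ are mutually independent, the indicator and the next-step value are independent random variables.

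Applying this independence and then the inductive hypothesis gives
\begin{align*}
\mathbb{E}\bigl[T_{x}(s, \pi(s,t), t, s^{\prime})\, V^{\pi}_{x}(s^{\prime}, t+1)\bigr]
&= \mathbb{E}\bigl[T_{x}(s, \pi(s,t), t, s^{\prime})\bigr]\, \mathbb{E}\bigl[V^{\pi}_{x}(s^{\prime}, t+1)\bigr]\\
&= T(s, \pi(s,t), t, s^{\prime})\, V^{\pi}(s^{\prime}, t+1),
\end{align*}
where the evaluation of the first factor uses that each recorded sample of $(s,\pi(s,t),t)$ is drawn from $T(s,\pi(s,t),t,\cdot)$, so the fixed index $x(s,\pi(s,t),t)$ selects a draw from that same distribution. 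Summing over $s^{\prime}$, scaling by $\gamma$, and adding back the reward yields exactly the Bellman recursion for $V^{\pi}(s,t)$, completing the induction.

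The only delicate point, and what I view as the main obstacle, is the independence argument: one must be explicit that $V^{\pi}_{x}(s^{\prime}, t+1)$ is measurable with respect to the $\sigma$-algebra generated by samples at time-steps strictly greater than $t$, so that it is independent of the single sample determining $T_{x}(s, \pi(s,t), t, \cdot)$. Once this is spelled out (e.g., by a separate short lemma noting that $V^{\pi}_{x}(\cdot, t+1)$ depends only on $\{x(s'',a'',t'')\}_{t'' \geq t+1}$ applied to the corresponding samples), the rest is a mechanical application of linearity and the inductive hypothesis.
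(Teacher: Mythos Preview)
Your proposal is correct and follows essentially the same approach as the paper: backward induction on $t$, with the inductive step hinging on the independence of $T_{x}(s,\pi(s,t),t,s')$ (a function of a single time-$t$ sample) from $V^{\pi}_{x}(s',t+1)$ (a function only of samples at times $\geq t+1$), allowing the expectation of the product to factor. Your write-up is, if anything, slightly more explicit than the paper's about why this independence holds.
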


\begin{proof}
Fix $x \in X$ and $\pi: S \times [H] \to A$. As base case of an inductive argument, note that for $s \in S$, $\mathbb{E}[V^{\pi}_{x}(s, H)] \eqdef \mathbb{E}[0] = 0 = V^{\pi}(s, H)$. Assume that for some $t \in [H]$, for $s \in S$, $\mathbb{E}[V^{\pi}_{x}(s, t + 1)] = V^{\pi}(s, t + 1)$.

Now, in \eqref{eqn:ns-bellman}, $T_{x}(s, \pi(s, t), t, s^{\prime})$ is the outcome of a sample for time step $t$,  but the samples for computing $V^{\pi}_{x}(s^{\prime}, t + 1)$ are all from time steps $t + 1$ and higher. Hence, random variables 
$T_{x}(s, \pi(s, t), t, s^{\prime})$ and $V^{\pi}_{x}(s^{\prime}, t + 1)$ are independent, implying that for $s \in S$,
\begin{align*}
\mathbb{E}[V^{\pi}_{x}(s, t)]
&= \mathbb{E}[R(s, \pi(s, t), t)] +\\
&\phantom{aaaaaa} \gamma \sum_{s^{\prime}} \mathbb{E} \Big[ T_{x}(s, \pi(s, t), t, s^{\prime}) \Big] \mathbb{E} \Big[ V^{\pi}_{x}(s^{\prime}, t + 1)\Big] \\
&= R(s, \pi(s, t), t) +
\gamma \sum_{s^{\prime}}  T(s, \pi(s, t), t, s^{\prime})  V^{\pi}(s^{\prime}, t + 1),
\end{align*}
since (1) $T_{x}(s, \pi(s, t), t, s^{\prime})$ is $1$ with probability $T(s, \pi(s, t), t, s^{\prime})$, and otherwise $0$; and (2) from the induction hypothesis, $\mathbb{E}[V^{\pi}_{x}(s, t + 1)] = V^{\pi}(s, t + 1)$ . The RHS is the same as in the Bellman equation on $M$ for $\pi$; hence $\mathbb{E}[V^{\pi}_{x}(s, t)] = V^{\pi}(s, t)$.
\end{proof}
Our upcoming analysis will depend on generalising value functions to \textit{sets} of worlds. We define the value function of a set as the average over its members.
\begin{definition}
For $Z \subseteq X$, $\pi: S \times [H] \to A$, $(s, t) \in S \times [H]$, $$V^{\pi}_{Z}(s, t) \eqdef \frac{1}{|Z|} \sum_{z \in Z} V^{\pi}_{z}(s, t).$$
\end{definition}

\noindent At this point, we can already conceive a recipe based on the classical TTM method to construct a near-optimal policy for $M$. To implement the idea of \citet{Kearns+MN:1999}, consider the $N$-sized subset of worlds $X^{\prime} \subseteq X$, given by $X^{\prime} = \left\{1^{|S||A|H}, 2^{|S||A|H}, \dots, N^{|S||A|H}\right\}$. 
By design, no two worlds in $X^{\prime}$ share any samples; hence they can provide $N$ independent value function estimates for each policy. From Hoeffding's Inequality~\citep{Hoeffding:1963}, the value function of each policy would be $\epsilon$-optimal with  probability $1 - \delta/|\Pi|$ for 
$N = O\left( \frac{(V_{\max})^{2}}{\epsilon^{2}} \log \frac{|\Pi|}{\delta}\right)$. Thus, an algorithm that returns an ``optimal policy'' for $X^{\prime}$ from a set of policies $\Pi$ would meet our PAC criterion with about $O\left( \frac{|S| |A| H(V_{\max})^{2}}{\epsilon^{2}} \log \frac{|\Pi|}{\delta}\right)$ samples~\citep{Kearns+MN:1999,Kakade2003PhDThesis}. Unfortunately, it is not easy to \textit{compute} an optimal policy for $X^{\prime}$ if the policy class $\Pi$ is the (usual) set of Markovian, non-stationary policies. The structure of $X^{\prime}$ is such that in general, \textit{history-dependent} policies can perform strictly better than Markovian policies.
On the other hand, it is straightforward to compute an optimal Markovian, non-stationary policy for the entire universe of worlds $X$. In fact, as formalised in the following lemma, value functions of policies turn out to be \textit{identical} on $\widehat{M}$ and on $X$. Therefore, the output of CEM-NS---$\widehat{\pi}$---is itself an optimal policy for $X$!
\begin{lemma}[Consistency of $X$ and $\widehat{M}$]
\label{lem:XequalsbarmMNS}
For $\pi: S \times [H] \to A$ and $(s, t) \in S \times [H]$, $$V^{\pi}_{X}(s, t) = V^{\pi}_{\widehat{M}}(s, t).$$
\end{lemma}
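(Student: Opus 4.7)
My plan is a backward induction on the time step $t$, exploiting the product structure of $X = [N]^{|S||A|H}$. The base case is immediate: at $t = H$, both $V^{\pi}_X(\cdot, H)$ and $V^{\pi}_{\widehat{M}}(\cdot, H)$ are $0$ by convention. For the inductive step, I will assume $V^{\pi}_X(s', t+1) = V^{\pi}_{\widehat{M}}(s', t+1)$ for all $s' \in S$, and show that $V^{\pi}_X(\cdot, t)$ satisfies the same Bellman recursion on $\widehat{M}$ that defines $V^{\pi}_{\widehat{M}}(\cdot, t)$.

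The core manipulation is to start from the definition of $V^{\pi}_X(s, t)$, plug in Equation \eqref{eqn:ns-bellman}, and then factor the sum over $x \in X$ according to the coordinate $x(s, \pi(s,t), t) \in [N]$. The key observation is that $V^{\pi}_x(s', t+1)$ is computed using only samples at tuples $(s'', a'', t'')$ with $t'' \geq t+1$, and therefore does not depend on the coordinate $x(s, \pi(s,t), t)$, which indexes a sample at time $t$. This lets me write
\begin{align*}
\sum_{x \in X} T_x(s, \pi(s,t), t, s') V^{\pi}_x(s', t+1) = \Bigl(\sum_{i \in [N]} \mathbf{1}[i\text{-th sample of } (s,\pi(s,t),t) \text{ is } s']\Bigr) \cdot \sum_{x_2} V^{\pi}_x(s', t+1),
\end{align*}
where $x_2$ ranges over the remaining $|S||A|H - 1$ coordinates. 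The first factor equals $\text{count}(s, \pi(s,t), t, s') = N \cdot \widehat{T}(s, \pi(s,t), t, s')$ by definition, while the second equals $\tfrac{|X|}{N} V^{\pi}_X(s', t+1)$ since $V^{\pi}_x(s', t+1)$ is invariant in $x_1$. Dividing by $|X|$ then gives
\begin{align*}
V^{\pi}_X(s, t) = R(s, \pi(s,t), t) + \gamma \sum_{s' \in S} \widehat{T}(s, \pi(s,t), t, s') V^{\pi}_X(s', t+1).
\end{align*}
Applying the inductive hypothesis on the right yields precisely the Bellman equation for $V^{\pi}_{\widehat{M}}(s, t)$ on $\widehat{M}$, closing the induction.

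The main care point is the independence-of-coordinates argument that allows the factoring: I must be precise that $V^{\pi}_x(s', t+1)$ truly depends only on coordinates $(s'', a'', t'')$ with $t'' \geq t+1$, which follows from unrolling the recursion in \eqref{eqn:ns-bellman} and noting that policy execution from $(s', t+1)$ strictly advances time. Once that is in place, the rest is straightforward counting---no probability or concentration is needed, since this is a deterministic identity for any fixed dataset $D$.
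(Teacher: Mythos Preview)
Your proposal is correct and follows essentially the same approach as the paper: backward induction on $t$, exploiting the product structure of $X$ to factor the sum $\sum_{x} T_x(\cdot)\,V^{\pi}_x(s',t+1)$ into a transition part (yielding $\widehat{T}$) and a future-value part (yielding $V^{\pi}_X(s',t+1)$), then invoking the induction hypothesis. The only cosmetic difference is that the paper splits $x$ into three time-blocks $x_{-}x_{\circ}x_{+}$ (all coordinates at times $<t$, $=t$, $>t$), whereas you isolate just the single coordinate $x(s,\pi(s,t),t)$ and lump the rest into $x_2$; your factoring is slightly more economical but the argument is the same.
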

The proof of this important lemma is given in Appendix~\ref{app:proofsofXequalsM}. The idea is to expand $V^{\pi}_{X}$ and use the fact that each sample in $D$ occurs in exactly the same number of worlds $x \in X$, whereupon it emerges that $V^{\pi}_{X}$ satisfies the Bellman equations for $\pi$ on $\widehat{M}$.

The crux of our paper is in the contrast between $X^{\prime}$ and $X$. Although $V^{\pi}_{x}$ is an unbiased estimate of $V^{\pi}$ for each $x \in X$ and $\pi: S \times [H] \to A$, the deviation of their \textit{average} $V^{\pi}_{X}$ from $V^{\pi}$ cannot be bounded directly using Hoeffding's inequality, since $V^{\pi}_{x}$ and $V^{\pi}_{x'}$ could be \textit{dependent} for worlds $x, x^{\prime} \in X$. For example, the worlds $1^{|S||A|H}$ and $12^{|S||A|H - 1}$ use the same sample for $(s_{0}, a_{0}, 0)$. In spite of this dependence, can we still piggyback on the analytical framework of TTM? Our answer is affirmative, and forms the basis of our view of CEM as an application of TTM.





\subsection{Batches of Mutually-Disjoint Worlds}
\label{subsec:convergenceoptimal}







We consider $N$-sized ``batches'' within $X$ that do lead to independent samples of $M$. Define worlds $x, x^{\prime} \in X$ to be \textit{disjoint} if for all $(s, a, t) \in S \times A \times [H]$,  $x(s, a, t) \neq x'(s, a, t)$. In other words, $x$ and $x^{\prime}$ are disjoint if they do not share any samples. A \textit{batch} $b \subseteq X$ is a set of some $N$ mutually disjoint elements of $X$. The set 
$\{132212312132, 221323121321, 313131233213\}$ is a batch in Example~\ref{ex:D}, as also is set $X^{\prime}$ from Section~\ref{subsec:eval}. For $x \in X$, let $B_{x}$ be the set of all batches in which $x$ is present, and let $B$ be the set of all batches. Simple counting (provided in Appendix~\ref{app:batch-counting}) shows that for $x \in X$, $|B_{x}| = (N - 1)!^{|S||A|H - 1}$, and $|B| = N!^{|S||A|H - 1}$. Recall that $V^{\pi}_{X}$ is the average value function of $\pi$ over worlds $x \in X$. At the heart of our proof is the following equation, which shows $V^{\pi}_{X}$ also as the average of the value functions of $\pi$ over batches $b \in B$.
\begin{align}
V^{\pi}_{X}(s, t) &= \frac{1}{|X|} \sum_{x \in X} V^{\pi}_{x}(s, t)
= \frac{1}{|X|} \sum_{b \in B} \sum_{x \in b} \frac{1}{|B_{x}|} V^{\pi}_{x}(s, t) \nonumber \\
&= \frac{N}{|X|(N - 1)!^{|S||A|H - 1} } \sum_{b \in B} \frac{\sum_{x \in b} V^{\pi}_{x}(s, t)}{N} \nonumber \\
&= \frac{1}{|B|} \sum_{b \in B} V^{\pi}_{b}(s, t).
\label{eqn:XasBsum}
\end{align}
The significance of \eqref{eqn:XasBsum} is that for \textit{each} batch $b \in B$, $V^{\pi}_{b}$ is indeed an average of $N$ \textit{independent} random variables, whose deviation from their expected value can be bounded using Hoeffding's inequality. Since $V^{\pi}_{X}$ is a convex combination of $V^{\pi}_{b}, b \in B$, we can apply Hoeffding's (less-used) result on the sums of dependent random variables~\cite[see Section 5]{Hoeffding:1963}. We restate Hoeffding's result as the following lemma. The commonly-used version of Hoeffding's inequality for independent random variables~\cite[see Theorem 2]{Hoeffding:1963} is obtained by taking $m = 1$.

\begin{lemma}
\label{lem:dephoeffding}[Hoeffding's inequality for average of certain dependent random variables]
Fix positive integers $\ell$ and $m$. For $i \in \{1, 2, \dots \ell\}$, $j \in \{1, 2, \dots, m\}$, let $U_{i, j}$ be a real-valued random variable supported on $[\alpha, \beta] \subset \mathbb{R}$; suppose $U_{i, j}$ and $U_{i, j^{\prime}}$ are independent for $j, j^{\prime} \in \{1, 2, \dots, m\}$ if $j \neq j^{\prime}$. Note that $U_{i, j}$ and $U_{i^{\prime}, j^{\prime}}$ could be dependent if $i \neq i^{\prime}$, for $i, i^{\prime} \in \{1, 2, \dots, \ell\}$, and $j, j^{\prime} \in \{1, 2, \dots, m\}$. Define $$U_{i} \eqdef \frac{1}{m} \sum_{j = 1}^{m} U_{i, j} \text{   and   }
U \eqdef \sum_{i = 1}^{\ell} p_{i} U_{i}$$ for some $p_{1}, p_{2}, \dots, p_{\ell} \in [0, 1]$ satisfying $\sum_{i = 1}^{\ell}$ $p_{i} = 1$.
For $\gamma > 0$, 
\begin{align*}
\mathbb{P}\{U \geq \mathbb{E}[U] + \gamma\} &\leq \exp \left( \frac{-2 m \gamma^{2}}{(\beta - \alpha)^{2}} \right) \text{ and } \\
\mathbb{P}\{U \leq \mathbb{E}[U] - \gamma\} &\leq \exp \left( \frac{-2 m \gamma^{2}}{(\beta - \alpha)^{2}} \right).
\end{align*}
\end{lemma}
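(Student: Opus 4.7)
The plan is to prove this via a Chernoff-style argument, exploiting convexity of the exponential to move the ``outer'' averaging (over the possibly-dependent index $i$) outside the moment-generating function, so that we only ever need to bound MGFs of averages of \emph{independent} variables, which is exactly Hoeffding's lemma.

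First I would apply the standard Chernoff trick: for any $s > 0$,
\begin{equation*}
\mathbb{P}\{U \geq \mathbb{E}[U] + \gamma\} \leq e^{-s\gamma}\, \mathbb{E}\bigl[e^{s(U - \mathbb{E}[U])}\bigr].
\end{equation*}
Now $U - \mathbb{E}[U] = \sum_{i=1}^{\ell} p_i (U_i - \mathbb{E}[U_i])$ is a convex combination because the $p_i$ are nonnegative and sum to $1$. By convexity of $x \mapsto e^{sx}$ (Jensen's inequality in its finite form),
\begin{equation*}
e^{s(U - \mathbb{E}[U])} \;\leq\; \sum_{i=1}^{\ell} p_i\, e^{s(U_i - \mathbb{E}[U_i])}.
\end{equation*}
This is the key step: it sidesteps the dependence across $i$ entirely, converting a product-of-MGFs problem (which we cannot do, since the $U_i$'s are not independent) into a weighted sum of individual MGFs. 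Taking expectations on both sides,
\begin{equation*}
\mathbb{E}\bigl[e^{s(U - \mathbb{E}[U])}\bigr] \;\leq\; \sum_{i=1}^{\ell} p_i\, \mathbb{E}\bigl[e^{s(U_i - \mathbb{E}[U_i])}\bigr].
\end{equation*}

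Next I would bound each term in the sum using the classical Hoeffding MGF bound. Fix $i$; then $U_i = \frac{1}{m}\sum_{j=1}^m U_{i,j}$ is an average of \emph{independent} random variables (by the hypothesis that $U_{i,j}$ and $U_{i,j'}$ are independent for $j\neq j'$), each supported on $[\alpha,\beta]$. By Hoeffding's lemma applied to $U_{i,j} - \mathbb{E}[U_{i,j}]$ and independence,
\begin{equation*}
\mathbb{E}\bigl[e^{s(U_i - \mathbb{E}[U_i])}\bigr]
= \prod_{j=1}^{m} \mathbb{E}\Bigl[e^{(s/m)(U_{i,j} - \mathbb{E}[U_{i,j}])}\Bigr]
\leq \exp\!\left(\frac{s^2 (\beta - \alpha)^2}{8 m}\right).
\end{equation*}
Since this bound is independent of $i$ and $\sum_i p_i = 1$, the weighted sum is bounded by the same quantity. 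Plugging back,
\begin{equation*}
\mathbb{P}\{U \geq \mathbb{E}[U] + \gamma\} \;\leq\; \exp\!\left(-s\gamma + \frac{s^2 (\beta - \alpha)^2}{8 m}\right).
\end{equation*}
Finally I would optimize over $s > 0$ by choosing $s = 4 m \gamma / (\beta - \alpha)^2$, which gives the claimed bound $\exp\!\bigl(-2 m \gamma^2/(\beta-\alpha)^2\bigr)$. The lower-tail inequality follows by applying the same argument to the variables $-U_{i,j}$, which are also supported on an interval of length $\beta - \alpha$ and whose dependence structure across $(i,j)$ is identical.

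The only conceptually tricky step is the convexity move at the outset; everything else is textbook Hoeffding. The subtlety worth flagging explicitly in the write-up is that convexity lets us replace the MGF of a convex combination by a convex combination of MGFs, which in turn lets us integrate out each $i$-slice using only the $j$-wise independence that we actually have — the cross-$i$ dependence never enters the calculation at all.
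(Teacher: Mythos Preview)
Your proposal is correct and follows essentially the same approach as the paper's proof: Chernoff bound via Markov's inequality, Jensen's inequality on the convex exponential to move the $p_i$-weighted average outside the MGF, factorisation across $j$ by independence, Hoeffding's lemma on each factor, and optimisation of the free parameter (your $s$, the paper's $h$) at $4m\gamma/(\beta-\alpha)^2$, with the lower tail handled by negation.
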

For convenient reference, we give a proof of this lemma in Appendix~\ref{app:dephoeffdingproof} (the original proof is from \citet[see Section 5]{Hoeffding:1963}). We are ready for our main result, which uses Lemma~\ref{lem:dephoeffding} to legitimise CEM's approach of optimising behaviour uniformly over every possible batch, in contrast with TTM's approach of doing so for a single, arbitrary batch.

\begin{theorem}[Sample complexity of CEM-NS]
\label{thm:non-stat}
The CEM-NS algorithm provides the relevant PAC guarantee for non-stationary MDP $M$ with parameters $\epsilon \in (0, V_{
max})$, $\delta \in (0, 1)$ if run with
$$N = \left\lceil \frac{2(V_{\max})^{2}}{\epsilon^{2}} \ln \frac{|S||A|^{|S|H}}{\delta}\right\rceil.$$
\end{theorem}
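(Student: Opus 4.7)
The plan is to show that, with high probability over the data $D$, the value function averaged over worlds concentrates around the true value function uniformly over the class $\Pi$ of Markovian non-stationary policies, and then invoke Lemma~\ref{lem:XequalsbarmMNS} to transfer optimality on $\widehat{M}$ (which is how $\widehat{\pi}$ is defined) to approximate optimality on $M$. The policy class has size $|\Pi| = |A|^{|S|H}$, which is what produces the $\log(|S||A|^{|S|H}/\delta)$ factor in $N$ after a union bound over $\Pi \times S$.

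For a fixed $\pi \in \Pi$ and $s \in S$, the concentration of $V^\pi_X(s,0)$ around $V^\pi(s,0)$ is obtained by combining three ingredients: (i) Lemma~\ref{lem:ns-exp}, which says $\mathbb{E}[V^\pi_x(s,0)] = V^\pi(s,0)$ for every world $x$, so by linearity $\mathbb{E}[V^\pi_X(s,0)] = V^\pi(s,0)$; (ii) the batch identity \eqref{eqn:XasBsum}, which rewrites $V^\pi_X(s,0)$ as a convex combination $\frac{1}{|B|}\sum_{b\in B} V^\pi_b(s,0)$, with each $V^\pi_b(s,0)$ being the average of $N$ world-level estimates using mutually disjoint (hence independent) samples from $D$; and (iii) Lemma~\ref{lem:dephoeffding} with $\ell = |B|$, $m = N$, $[\alpha,\beta] = [0, V_{\max}]$ (since every trajectory of $M_x$ has return in $[0,V_{\max}]$ by our reward assumption), and deviation parameter $\epsilon/2$, which yields
\begin{equation*}
\mathbb{P}\bigl\{\,\bigl|V^\pi_X(s,0) - V^\pi(s,0)\bigr| > \epsilon/2\,\bigr\} \;\leq\; 2\exp\!\Bigl(-\tfrac{N\epsilon^2}{2V_{\max}^2}\Bigr).
\end{equation*}
Taking a union bound over all $(\pi,s) \in \Pi \times S$ (or, more efficiently, only over those one-sided events that are actually needed below) and setting the resulting bound to be at most $\delta$ yields exactly the prescribed $N$.

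To finish, condition on the good event that $|V^\pi_X(s,0) - V^\pi(s,0)| \leq \epsilon/2$ for every $\pi \in \Pi$ and $s \in S$. By Lemma~\ref{lem:XequalsbarmMNS}, $V^\pi_X = V^\pi_{\widehat{M}}$ for every $\pi$, so $\widehat{\pi}$, being optimal on $\widehat{M}$, is also optimal on $X$: $V^{\widehat{\pi}}_X(s,0) \geq V^{\pi^\star}_X(s,0)$ for every $s$. Chaining three inequalities then gives, for every $s \in S$,
\begin{equation*}
V^{\widehat{\pi}}(s,0) \;\geq\; V^{\widehat{\pi}}_X(s,0) - \tfrac{\epsilon}{2} \;\geq\; V^{\pi^\star}_X(s,0) - \tfrac{\epsilon}{2} \;\geq\; V^{\star}(s,0) - \epsilon,
\end{equation*}
which is the PAC guarantee.

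The only real obstacle is step (ii)--(iii): the worlds in $X$ are not independent (two worlds that agree on any single $(s,a,t)$ share a sample), so a naive Hoeffding bound on $\frac{1}{|X|}\sum_x V^\pi_x$ is not available. This is precisely the issue that the batch decomposition in \eqref{eqn:XasBsum} resolves, because within each batch the worlds are pairwise disjoint by construction and Hoeffding's inequality for averages of dependent random variables (Lemma~\ref{lem:dephoeffding}) applies with $m = N$ rather than $m = |X|$; the remaining calculation is just substituting $\epsilon' = \epsilon/2$, bounding the range by $V_{\max}$, and performing the union bound.
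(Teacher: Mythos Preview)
Your proposal is correct and follows essentially the same approach as the paper: both arguments invoke Lemma~\ref{lem:XequalsbarmMNS} to identify $\widehat{\pi}$ as optimal on $X$, use the batch decomposition \eqref{eqn:XasBsum} together with Lemma~\ref{lem:ns-exp} and Lemma~\ref{lem:dephoeffding} (with $m=N$, range $[0,V_{\max}]$, deviation $\epsilon/2$) to get per-$(\pi,s)$ one-sided tail bounds of $\exp(-N\epsilon^{2}/(2V_{\max}^{2}))$, and then union-bound over $|S|\cdot|A|^{|S|H}$ events to obtain the stated $N$. The paper frames the last step contrapositively (``if $\widehat{\pi}$ is not $\epsilon$-optimal then some bad one-sided event occurred''), while you condition on the good event and chain the three inequalities directly; these are equivalent, and your parenthetical remark about needing only the one-sided events is exactly what makes the constant match.
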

\begin{proof}
Recall that CEM-NS returns $\widehat{\pi}$,  which depends on the data $D$, and hence is random. Lemma~\ref{lem:XequalsbarmMNS} gives us that $\widehat{\pi}$ is optimal for $X$. Now, if $\widehat{\pi}$ is not $\epsilon$-optimal for $M$, it means that either (i) $X$ under-estimates  $V^{\pi^{\star}}(s, 0)$ by at least $\frac{\epsilon}{2}$, or (ii) $X$ over-estimates $V^{\pi}(s, 0)$ by at least $\frac{\epsilon}{2}$ for some non-$\epsilon$-optimal policy $\pi: S \times [H] \to A$ and state $s \in S$. From \eqref{eqn:XasBsum}, we have that $V^{\pi}_{X}(s, 0) = \sum_{b \in B} \frac{1}{|B|}V^{\pi}_{b}(s, 0)$, where $V^{\pi}_{b}(s, 0)$ for each $b \in B$ is a sum on $N$ independent random variables with mean $V^{\pi}(s, 0)$ (from Lemma~\ref{lem:ns-exp}). Define $\delta^{\prime} \eqdef \frac{\delta}{|S||A|^{|S|H}}$. We apply Lemma~\ref{lem:dephoeffding} to get  $\mathbb{P}\left\{V^{\pi^{\star}}_{X}(s, 0) \leq V^{\pi^{\star}}(s, 0) - \frac{\epsilon}{2}\right\} \leq \delta^{\prime}$ and $\mathbb{P}\left\{V^{\pi}_{X}(s, 0) \geq V^{\pi}(s, 0) + \frac{\epsilon}{2}\right\} \leq \delta^{\prime}$
for $s \in S$ and $\pi: S \times {H} \to A$. Since there are $|S|$ states and $|A|^{|S|H}$ policies, a union bound establishes that $\widehat{\pi}$ is $\epsilon$-optimal with probability at least $|S||A|^{|S|H} \delta^{\prime} = \delta$.
\end{proof}
Since each $(s, a, t) \in S \times A \times [H]$ is sampled $N$ times by CEM-NS, and since $V_{\max} \leq H$, the algorithm's overall sample complexity is
$$O\left(\frac{|S||A|H^{3}}{\epsilon^{2}} \left(\log \frac{1}{\delta} + |S|H \log |A|\right)\right).$$ Recall that \citet{Li_OperationsResearch2023_SampleSizeBarrier} show a bound of $\tilde{O}\left(\frac{|S||A|H^{4}}{\epsilon^{2}} \log \frac{1}{\delta}\right)$ samples for \textbf{CEM-NS}. In the regime that $\delta$ is made small after fixing other parameters, our bound is tighter by a factor of $H$. This is a significant result since the coefficient of $\log \frac{1}{\delta}$ now has a cubic dependence on the horizon---which we show is unavoidable by providing an explicit lower bound in Section~\ref{sec:lowerbound}.

\section{Stationary MDPs}
\label{sec:stationary}

In this section, we analyse CEM when applied to \textit{stationary} MDP $M$. We now use $\widehat{M}$, $\widehat{\pi}$, and $X$ to denote corresponding objects in the stationary setting. 

\subsection{Certainty-Equivalence: CEM-S Algorithm}
\label{subsec:consistent}

We continue with the same definition of $M = (S, A, T, R, H, \gamma)$, only now assuming that $T$ and $R$ do not depend on the time step $t$ (which we drop from our notation). Consistent with previous literature, we also assume $H = \infty$. Since there is no time-dependence, we take that each tuple $(s, a) \in S \times A$ is sampled $N$ times, $N \geq 1$, in the data $D$. For $(s, a, s^{\prime}) \in S \times A \times S$, let $\text{count}(s, a, s^{\prime})$ denote the number of transitions observed in $D$ to reach $s^{\prime}$ by taking $a$ from $s$. The empirical MDP $\widehat{M} = (S, A, \widehat{T}, R, H, \gamma)$ therefore satisfies $$\widehat{T}(s, a, s^{\prime}) = \frac{\text{count}(s, a, s^{\prime})}{N}$$ 
for $s, s^{\prime} \in S, a \in A$. It is well-known that every stationary, infinite-horizon MDP admits a deterministic optimal policy. The optimal value function $V^{\star}_{\widehat{M}}$ and optimal policy $\widehat{\pi}: S \to A$ satisfy 
\begin{align*}
V^{\star}_{\widehat{M}}(s) &= \max_{a \in A} \Big( R(s, a) + \gamma \sum_{s^{\prime} \in S} \widehat{T}(s, a, s^{\prime}) V^{\star}_{\widehat{M}}(s^{\prime}) \Big); \\
\widehat{\pi}(s) &\in \argmax_{a \in A} \Big( R(s, a) + \gamma \sum_{s^{\prime} \in S} \widehat{T}(s, a, s^{\prime}) V^{\star}_{\widehat{M}}(s^{\prime}) \Big)
\end{align*}
for $s \in S$. 
$V^{\star}_{\widehat{M}}$ and $\widehat{\pi}$ may be computed from $D$ by value iteration, policy iteration, or linear programming~\citep{Littman+DK:1995}. Our upcoming sample-complexity bound would only get scaled by a constant factor, say, if an $\frac{\epsilon}{2}$-optimal policy is computed for $\widehat{M}$---and this computation needs only a polynomial number of arithmetic operations in $|S|$, $|A|$, $\frac{1}{1 - \gamma}$, and $\log\frac{1}{\epsilon}$. To keep the exposition uncluttered, we assume that our certainty-equivalence implementation---denoted CEM-S (``S'' for ``stationary'')---indeed computes and returns $\widehat{\pi}$ exactly.

\subsection{Truncated Horizon}
\label{subsec:truncatedhorizon}

The assumption of a finite horizon $H$ in the non-stationary setting meant that our worlds would also have this same horizon $H$. Since we have taken $H = \infty$ for stationary $M$, we require an intermediate step to apply the framework of a set of worlds. Consider a finite horizon MDP $M_{\overline{H}} = (S, A, T, R, \overline{H}, \gamma)$ that is identical to $M$ other than for having a \textit{finite} horizon $\overline{H} \eqdef \left\lceil\frac{1}{1 - \gamma} \ln \left (\frac{4 V_{\max}}{\epsilon}\right)\right\rceil$. The corresponding empirical MDP is  $\widehat{M}_{\overline{H}} = (S, A, \widehat{T}, R, \overline{H}, \gamma)$. Since the infinite-discounted sum from each state is constrained to $[0, V_{\max}]$, 
the truncation loss 
$\mathbb{E}_{\pi}[\sum_{t = \overline{H}}^{\infty} \gamma^{t} r^{t}]$
must lie in $[0, \gamma^{\overline{H}} V_{\max}] \subseteq [0, \frac{\epsilon}{4}]$
on both $M$ and $\widehat{M}$.
\begin{proposition}[Bounded truncation loss]
\label{prop:trunc}
For $\pi: S \to A$, $s \in S$: $$V^{\pi}(s) - \frac{\epsilon}{4}
\leq 
V^{\pi}_{M_{\overline{H}}}(s, 0) 
\leq V^{\pi}(s); V^{\pi}_{\widehat{M}}(s) - \frac{\epsilon}{4} \leq V^{\pi}_{\widehat{M}_{\overline{H}}}(s, 0) \leq V^{\pi}_{\widehat{M}}(s).$$
\end{proposition}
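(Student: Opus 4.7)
The plan is to establish both pairs of inequalities by the same Markov-style argument, using the assumption that the total discounted sum of rewards along any trajectory lies in $[0, V_{\max}]$, and then to show that the specific choice of $\overline{H}$ makes the tail contribution at most $\epsilon/4$.

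First I would write the truncation error as a conditional expectation of a tail sum. Fix $\pi$ and $s$, and let $r^0, r^1, \dots$ be the sequence of rewards produced by running $\pi$ from $s$ in $M$. By definition,
\begin{align*}
V^{\pi}(s) - V^{\pi}_{M_{\overline{H}}}(s,0) = \mathbb{E}_{\pi,M}\Bigl[\textstyle\sum_{t=\overline{H}}^{\infty} \gamma^{t} r^{t} \,\Big|\, s^0 = s\Bigr].
\end{align*}
Conditioning on the state $s^{\overline{H}}$ reached at time $\overline{H}$ and invoking stationarity of both $M$ and $\pi$, the tail sum from time $\overline{H}$ onward has the same distribution as a fresh rollout of $\pi$ from $s^{\overline{H}}$, so the conditional expectation equals $\gamma^{\overline{H}} V^{\pi}(s^{\overline{H}})$. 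Taking outer expectation,
\begin{align*}
V^{\pi}(s) - V^{\pi}_{M_{\overline{H}}}(s,0) = \gamma^{\overline{H}}\, \mathbb{E}_{\pi,M}\!\left[V^{\pi}(s^{\overline{H}}) \mid s^0 = s\right].
\end{align*}

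Next I would invoke the global reward assumption. Since every trajectory has discounted return in $[0, V_{\max}]$, in particular $V^{\pi}(s') \in [0, V_{\max}]$ for every $s' \in S$, so the expectation above lies in $[0, V_{\max}]$ and the truncation gap lies in $[0, \gamma^{\overline{H}} V_{\max}]$. Finally, the choice $\overline{H} = \lceil \frac{1}{1-\gamma} \ln \frac{4V_{\max}}{\epsilon}\rceil$ gives $\gamma^{\overline{H}} \leq e^{-(1-\gamma)\overline{H}} \leq \epsilon/(4V_{\max})$, using $\ln \gamma \leq -(1-\gamma)$. Hence $\gamma^{\overline{H}} V_{\max} \leq \epsilon/4$, which yields the first pair of inequalities.

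For the second pair, the identical argument applies to $\widehat{M}$: $\widehat{M}$ is a stationary MDP with the same reward function $R$, and every trajectory in $\widehat{M}$ is a trajectory that can be realized in $M$ (because $\widehat{T}(s,a,s') > 0$ only when $s'$ was observed as a next state of $(s,a)$ in $D$, which happens only if $T(s,a,s') > 0$). Therefore every $\widehat{M}$-trajectory has discounted return in $[0, V_{\max}]$, so $V^{\pi}_{\widehat{M}}(s') \in [0, V_{\max}]$ for every $s' \in S$, and the same decomposition and bound $\gamma^{\overline{H}} V_{\max} \leq \epsilon/4$ give the claim. No step is really an obstacle; the one point to be careful about is not confusing the global-return bound with a per-step reward bound — individual rewards need not lie in $[0,1]$, but the Markov decomposition reduces the tail exactly to a value function, which is bounded by $V_{\max}$.
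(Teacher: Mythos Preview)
Your proof is correct and follows essentially the same approach as the paper, which only offers the one-line justification that the tail $\mathbb{E}_{\pi}[\sum_{t=\overline{H}}^{\infty}\gamma^{t}r^{t}]$ lies in $[0,\gamma^{\overline{H}}V_{\max}]\subseteq[0,\epsilon/4]$ on both $M$ and $\widehat{M}$. Your Markov decomposition and your explicit observation that every $\widehat{M}$-trajectory is realizable in $M$ (so inherits the $[0,V_{\max}]$ bound on its discounted return) simply fill in details the paper leaves implicit.
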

The truncated horizon $\overline{H}$ has no relevance to the CEM-S \textit{algorithm} itself; the algorithm is based on the true (infinite) horizon of $M$. However, our  \textit{analysis} works on $M_{\overline{H}}$, 
using worlds of length $|S||A|\overline{H}$ to encode samples. Proposition~\ref{prop:trunc} enables us to relate the extent of sub-optimality over a finite horizon $\overline{H}$ with that on an infinite horizon.




\subsection{Set of Worlds}
\label{subsec:sampledmdpss}



Each world $x$ in our set of worlds $X$ is an $|S||A|\overline{H}$-length string on the alphabet $[N]$. It associates a transition sample from $D$ for each $(s, a, t) \in S \times A \times [\overline{H}]$. However, since $M_{\overline{H}}$ is stationary, samples are not distinguished based on time step in the data $D$. Hence, if $D$ in Example~\ref{ex:D} had come from a \textit{stationary} MDP, we would ignore $t$ and pool together all $N = 9$ samples for each (state, action) pair. Thus, for the pair $(s_{0}, a_{0})$, the sequence of samples (read row by row from top to bottom, and left to right within each row) would be $s_{1}, s_{1}, s_{1}, s_{0}, s_{0}, s_{1}, s_{1}, s_{0}, s_{0}$. The world $x = 571634978542$ would induce a deterministic MDP with probabilities of $1$ for the twelve transitions
$(s_{0}, a_{0}, 0, s_{0})$,
$(s_{0}, a_{0}, 1, s_{1})$,
$(s_{0}, a_{0}, 2, s_{1})$,
$(s_{0}, a_{1}, 0, s_{1})$,\\
$(s_{0}, a_{1}, 1, s_{1})$,
$(s_{0}, a_{1}, 2, s_{0})$,
$(s_{1}, a_{0}, 0, s_{1})$, 
$(s_{1}, a_{0}, 1, s_{0})$,
$(s_{1}, a_{0}, 2, s_{1})$,
$(s_{1}, a_{1}, 0, s_{0})$, 
$(s_{1}, a_{1}, 1, s_{0})$, and 
$(s_{1}, a_{1}, 2, s_{0})$. In general there are $N^{|S||A|\overline{H}}$ worlds $x \in X$.


In the stationary setting, it is seen that $X$ evaluates policies identical to $\widehat{M}_{\overline{H}}$. 
\begin{lemma}[Consistency of $X$ and $\widehat{M}_{\overline{H}}$]
\label{lem:XequalsbarmMS}
For $\pi: S \times [\overline{H}] \to A$ and $(s, t) \in S \times [\overline{H}]$, $$V^{\pi}_{X}(s, t) = V^{\pi}_{\widehat{M}_{\widehat{H}}}(s, t).$$
\end{lemma}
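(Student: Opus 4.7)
The plan is to mimic the proof of Lemma~\ref{lem:XequalsbarmMNS}, proceeding by backward induction on $t$ from $t = \overline{H}$ down to $t = 0$ and showing that $V^{\pi}_{X}(\cdot, t)$ satisfies the Bellman equation for $\pi$ on $\widehat{M}_{\overline{H}}$. The base case $t = \overline{H}$ is immediate, since $V^{\pi}_{x}(s, \overline{H}) = 0$ for every $x \in X$ and $s \in S$ by convention, matching $V^{\pi}_{\widehat{M}_{\overline{H}}}(s, \overline{H})$.

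For the inductive step at $(s, t)$ with $t < \overline{H}$, write $a = \pi(s, t)$ and let $\sigma_{i}(s, a)$ denote the $i$-th pooled sample of $(s, a)$ in $D$, so that $T_{x}(s, a, t, s') = \mathbb{1}[\sigma_{x(s, a, t)}(s, a) = s']$. Substituting the world-level Bellman recursion \eqref{eqn:ns-bellman} into $V^{\pi}_{X}(s, t) = |X|^{-1} \sum_{x \in X} V^{\pi}_{x}(s, t)$, I would swap sums in order to isolate the coordinate $x(s, a, t)$. The key observation---and the reason the argument survives sample pooling---is that $V^{\pi}_{x}(s', t+1)$ depends only on those coordinates $x(s'', a'', t'')$ with $t'' \geq t + 1$, and in particular is not a function of $x(s, a, t)$.

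With that observation in hand, I would decompose each world as $(x(s, a, t), x_{-})$, where $x_{-}$ enumerates the remaining $N^{|S||A|\overline{H} - 1}$ coordinate values. Summing first over $i = x(s, a, t) \in [N]$ collapses $\sum_{i \in [N]} \mathbb{1}[\sigma_{i}(s, a) = s']$ to $\mathrm{count}(s, a, s') = N \widehat{T}(s, a, s')$; summing subsequently over $x_{-}$, and using that $V^{\pi}_{x}(s', t+1)$ depends only on $x_{-}$, the nested average reproduces exactly $V^{\pi}_{X}(s', t+1)$. Together with $|X| = N^{|S||A|\overline{H}}$, this yields $V^{\pi}_{X}(s, t) = R(s, a) + \gamma \sum_{s' \in S} \widehat{T}(s, a, s') V^{\pi}_{X}(s', t+1)$, and the induction hypothesis closes the step through the Bellman equation on $\widehat{M}_{\overline{H}}$.

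The main obstacle, and what genuinely distinguishes this from the non-stationary case, is pooling: the same physical transition $\sigma_{i}(s, a)$ may be reused at several different time steps within a single world. The careful point to verify is that, even so, the coordinates of $x$ at distinct $(s, a, t)$ triples vary independently over $X$, and the future value $V^{\pi}_{x}(s', t+1)$ is determined by coordinates at times strictly greater than $t$ and is therefore unaffected by the choice $x(s, a, t)$. Once this separation is made explicit, the combinatorial double-count above goes through essentially as in the non-stationary setting.
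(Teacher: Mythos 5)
Your proof is correct and follows essentially the same route as the paper's: backward induction on $t$, decomposing each world string so that the coordinate selecting the time-$t$ transition is separated from the coordinates determining $V^{\pi}_{x}(\cdot, t+1)$, summing over that coordinate to recover $N\widehat{T}(s,a,s^{\prime})$, and closing with the induction hypothesis (the paper isolates the entire time-$t$ segment $x_{\circ}$ rather than the single coordinate $x(s,a,t)$, a purely cosmetic difference). Your emphasis that the lemma is a combinatorial identity for fixed $D$ — so sample pooling affects only the statistical (bias) analysis handled elsewhere, not this consistency claim — matches the paper's remark that the proof is identical to the non-stationary case with $\overline{H}$ in place of $H$.
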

The proof of Lemma~\ref{lem:XequalsbarmMS} is identical to that of Lemma~\ref{lem:XequalsbarmMNS}, and given in Appendix~\ref{app:proofsofXequalsM}. The lemma and upcoming results also apply to stationary policies (the ``$t$'' comes from the finite horizon of $\widehat{M}_{\overline{H}}$).

\subsection{Biased and Unbiased Worlds}
\label{subsec:depworlds}

Recall that in the non-stationary setting, not all $x \in X$ were mutually disjoint---which means their induced MDPs had dependent transitions. We resolved this issue by partitioning $X$ into $N$-sized \textit{batches} of mutually-disjoint worlds. In the stationary setting, we could encounter an issue of dependence even \textit{within} a single world $x \in X$. Consider the world $x = 4416823295$ from Example~\ref{ex:D}: this world is constrained to
set both $T_{x}(s_{0}, a_{0}, 0, s_{0})$ and 
$T_{x}(s_{0}, a_{0}, 1, s_{0})$ based on the the \textit{same} sample, namely the $4^{\text{th}}$ one collected for $(s_{0}, a_{0})$. Consequently,  
$T_{x}(s_{0}, a_{0}, 0, s_{0})$ is \textit{dependent} on $V^{\pi}_{x}(s_{0}, 1)$ for any policy $\pi$ that takes $a_{0}$ from $s_{0}$ at time step $1$. We can no longer claim $\mathbb{E}[V^{\pi}_{x}] = V^{\pi}_{M_{\overline{H}}}$ (like we did while analysing CEM-NS, in the proof of Lemma~\ref{lem:ns-exp}). 

To proceed, we partition $X$ into sets $X_{\text{biased}}$ and $X_{\text{unbiased}}$. The set $X_{\text{biased}}$ contains all worlds $x \in X$ for which there exist $(s, a, t, t^{\prime}) \in S \times A \times [\overline{H}] \times [\overline{H}]$, $t \neq t^{\prime}$, such that $x(s, a, t) = x(s, a, t^{\prime})$. Such worlds induce MDPs that provide possibly biased value estimates. The complementary set $X_{\text{unbiased}} \eqdef X \setminus X_{\text{biased}}$ 
contains worlds that do
provide an unbiased estimate of the value function of each policy $\pi: S \times [\overline{H}] \to A$ on $M_{\overline{H}}$.
\begin{lemma}[Worlds in $X_{\text{unbiased}}$ provide unbiased estimates]
\label{lem:s-exp}
For $x \in X_{\textup{unbiased}}$, $\pi: S \times [\overline{H}] \to A$, $(s, t) \in S \times [\overline{H}]$,
$$\mathbb{E}[V^{\pi}_{x}(s, t)] = V^{\pi}_{M_{\overline{H}}}(s, t).$$
\end{lemma}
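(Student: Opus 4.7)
The plan is to mirror the proof of Lemma~\ref{lem:ns-exp} essentially verbatim, with a single extra observation that invokes the defining property of $X_{\textup{unbiased}}$ whenever an independence argument is needed. I would proceed by backward induction on $t$: the base case $t = \overline{H}$ is immediate since $V^{\pi}_{x}(s, \overline{H}) \eqdef 0 = V^{\pi}_{M_{\overline{H}}}(s, \overline{H})$; and the inductive hypothesis at some $t \in [\overline{H}]$ is that $\mathbb{E}[V^{\pi}_{x}(s', t + 1)] = V^{\pi}_{M_{\overline{H}}}(s', t + 1)$ for all $s' \in S$.

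For the inductive step I would expand the Bellman-style recurrence,
\begin{align*}
V^{\pi}_{x}(s, t) = R(s, \pi(s, t)) + \gamma \sum_{s' \in S} T_{x}(s, \pi(s, t), t, s')\, V^{\pi}_{x}(s', t + 1),
\end{align*}
and reduce the claim to showing, for each $s' \in S$, that
$\mathbb{E}\bigl[T_{x}(s, \pi(s, t), t, s')\, V^{\pi}_{x}(s', t + 1)\bigr] = T(s, \pi(s, t), s')\, V^{\pi}_{M_{\overline{H}}}(s', t + 1)$.
Since $T_{x}(s, \pi(s, t), t, s')$ is a Bernoulli random variable with mean $T(s, \pi(s, t), s')$, all that is required is to check independence of this Bernoulli from $V^{\pi}_{x}(s', t + 1)$: given independence, the expectation factorises, the inductive hypothesis handles the second factor, and summing over $s'$ reproduces the Bellman equation for $\pi$ on $M_{\overline{H}}$, yielding $\mathbb{E}[V^{\pi}_{x}(s, t)] = V^{\pi}_{M_{\overline{H}}}(s, t)$.

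The only genuinely new step, and the place where membership in $X_{\textup{unbiased}}$ matters, is that independence check. The variable $T_{x}(s, \pi(s, t), t, s')$ is a deterministic function of the single sample from the $(s, \pi(s, t))$-pool whose pool-index is $x(s, \pi(s, t), t)$. Unrolling the recursion (a short sub-induction confirms this), $V^{\pi}_{x}(s', t + 1)$ is a deterministic function of the samples indexed by $x(s'', a'', t'')$ for various $(s'', a'')$ and strictly later time steps $t'' \geq t + 1$. Draws from distinct $(s'', a'')$-pools are independent by the sampling scheme, so the only way our two quantities could be dependent is if the rollout of $\pi$ from $(s', t + 1)$ ever revisits the pair $(s, \pi(s, t))$ at some $t'' \geq t + 1$; but then the defining property of $X_{\textup{unbiased}}$ forces $x(s, \pi(s, t), t'') \neq x(s, \pi(s, t), t)$, so a \emph{different} (and hence independent) draw from that pool is consulted.

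I do not foresee any serious obstacle. The one piece of bookkeeping that needs a little care is the precise claim that $V^{\pi}_{x}(s', t + 1)$ depends only on samples with time-index $\geq t + 1$; this is evident from the recursive definition, but I would make it rigorous with an explicit sub-induction so that the independence argument above applies without subtle side effects (for instance, making sure no expectation is being conditioned on a partially-revealed sample). Once that is in place, the rest is direct substitution.
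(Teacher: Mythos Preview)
Your proposal is correct and follows essentially the same approach as the paper: backward induction on $t$ reusing the argument of Lemma~\ref{lem:ns-exp}, with the sole new ingredient being the observation that for $x \in X_{\textup{unbiased}}$ the indices $x(s, a, t)$ are distinct across time steps, which restores the independence of $T_{x}(s, \pi(s, t), t, s')$ and $V^{\pi}_{x}(s', t + 1)$. Your explicit sub-induction to verify that $V^{\pi}_{x}(s', t + 1)$ depends only on samples with time-index at least $t + 1$ is a bit more careful than the paper's one-line justification, but the underlying idea is the same.
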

The proof is identical to the one of Lemma~\ref{lem:ns-exp}, relying on the independence of random variables $T_{x}(s, \pi(s), t, s^{\prime})$ and $V^{\pi}_{x}(s^{\prime}, t + 1)$ for $x \in X_{\text{unbiased}}$. 

Without any useful handle on worlds $x \in X_{\text{biased}}$, our strategy is to show that the size of 
$X_{\text{biased}}$ as a fraction of $|X|$ vanishes with $N$, implying that $V^{\pi}_{X_{\text{biased}}}$ influences $V^{\pi}_{X}$ only marginally when $N$ is sufficiently large. The following lemma is proven in Appendix~\ref{app:proofoflemmabiasedsize}.

\begin{lemma}[Error from biased worlds vanishes with $N$]
\label{lem:XunbiasedclosetoX}
For $\pi: S \times [\overline{H}] \to A$, $(s, t) \in S \times [\overline{H}]$: 
$$\left|V^{\pi}_{X}(s, t) - V^{\pi}_{X_{\textup{unbiased}}}(s, t)\right| \leq \frac{|S||A|\overline{H} (\overline{H} - 1) V_{\max}}{N}.$$
\end{lemma}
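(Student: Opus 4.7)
The plan is a short convex-combination argument combined with a union-bound count on $X_{\textup{biased}}$. Since $X$ is the disjoint union of $X_{\textup{biased}}$ and $X_{\textup{unbiased}}$, the definition of $V^{\pi}_{X}$ as an average gives
\begin{align*}
V^{\pi}_X(s,t) \;=\; \frac{|X_{\textup{biased}}|}{|X|}\, V^{\pi}_{X_{\textup{biased}}}(s,t) \;+\; \frac{|X_{\textup{unbiased}}|}{|X|}\, V^{\pi}_{X_{\textup{unbiased}}}(s,t).
\end{align*}
Subtracting $V^{\pi}_{X_{\textup{unbiased}}}(s,t)$ from both sides and taking absolute values yields
\begin{align*}
\bigl|V^{\pi}_X(s,t) - V^{\pi}_{X_{\textup{unbiased}}}(s,t)\bigr| \;=\; \frac{|X_{\textup{biased}}|}{|X|}\,\bigl|V^{\pi}_{X_{\textup{biased}}}(s,t) - V^{\pi}_{X_{\textup{unbiased}}}(s,t)\bigr|.
\end{align*}
Because every world $x$ induces a deterministic MDP whose trajectories have discounted return in $[0, V_{\max}]$ (by the standing reward assumption on $M$), we have $V^{\pi}_x(s,t) \in [0, V_{\max}]$ for every $x$, and taking averages preserves this. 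Hence the rightmost factor is at most $V_{\max}$, and it remains to bound $|X_{\textup{biased}}|/|X|$.

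The plan for the count is a straightforward union bound. For each $(s,a,t,t') \in S \times A \times [\overline{H}] \times [\overline{H}]$ with $t \neq t'$, define
\begin{align*}
E_{s,a,t,t'} \eqdef \{x \in X : x(s,a,t) = x(s,a,t')\}.
\end{align*}
Then by the definition of $X_{\textup{biased}}$, we have $X_{\textup{biased}} = \bigcup_{s,a,t,t'} E_{s,a,t,t'}$. For each such tuple, the constraint pins one coordinate of $x$ to another (with $N$ common values available), while the remaining $|S||A|\overline{H} - 2$ coordinates of $x$ are free; so $|E_{s,a,t,t'}| = N^{|S||A|\overline{H}-1}$, giving $|E_{s,a,t,t'}|/|X| = 1/N$. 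There are exactly $|S||A|\overline{H}(\overline{H}-1)$ such tuples, so the union bound delivers $|X_{\textup{biased}}|/|X| \leq |S||A|\overline{H}(\overline{H}-1)/N$.

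Combining the two displayed inequalities yields the stated bound. There is no real obstacle here; the only detail to be careful about is that $V^{\pi}_x \in [0, V_{\max}]$ holds for \emph{every} world (including the biased ones), which is what lets us control $|V^{\pi}_{X_{\textup{biased}}} - V^{\pi}_{X_{\textup{unbiased}}}|$ by $V_{\max}$ despite the fact that our weaker reward assumption allows individual per-step rewards to be negative.
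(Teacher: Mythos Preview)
Your proof is correct and follows essentially the same approach as the paper: the convex-combination decomposition and the $V_{\max}$ bound on $|V^{\pi}_{X_{\textup{biased}}} - V^{\pi}_{X_{\textup{unbiased}}}|$ are identical. The only minor difference is in bounding $|X_{\textup{biased}}|/|X|$: the paper computes $|X_{\textup{unbiased}}|$ exactly as $\bigl(N(N-1)\cdots(N-\overline{H}+1)\bigr)^{|S||A|}$ and then applies $1 - (1 - (\overline{H}-1)/N)^{|S||A|\overline{H}} \le |S||A|\overline{H}(\overline{H}-1)/N$, whereas your union-bound over collision events $E_{s,a,t,t'}$ reaches the same bound a bit more directly.
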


Finally, just as we grouped $x \in X$ into mutually-disjoint batches in Section~\ref{sec:ssw}, we do the same for $x \in X_{\text{unbiased}}$ in the stationary setting. We do not consider worlds in $X_{\text{biased}}$ for this grouping. Recall that worlds $x$ and $x^{\prime}$ are disjoint if and only if $x(s, a, t) \neq x^{\prime}(s, a, t)$ for all $(s, a, t) \in S \times A \times [\overline{H}]$. Assume for simplicity that $N$ is a multiple of $\overline{H}$, and define $N^{\prime} = N / \overline{H}$. 
Calculations provided in Appendix~\ref{app:batch-counting} show that (1) $|X_{\text{unbiased}}| = \frac{N!^{|S||A|}}{(N - \overline{H})!^{|S||A|}}$, (2) the set of all batches $B$
(each batch containing $N^{\prime}$ mutually-disjoint worlds $x \in X_{\text{unbiased}}$)
is of size $\frac{N!^{|S||A|}}{N^{\prime}!}$, and (3) the set of all batches $B_{x}$ that contain any particular world $x \in X_{\text{unbiased}}$ is of size $\frac{(N - \overline{H})!^{|S||A|}}{(N^{\prime} - 1)!}$. Substituting into a working similar to \eqref{eqn:XasBsum}, we observe
\begin{align}
V^{\pi}_{X_{\text{unbiased}}}(s, t) = \frac{1}{|B|} \sum_{b \in B} V^{\pi}_{b}(s, t)
\label{eq:XAsBsum-s}
\end{align}
for $\pi: S \times [\overline{H}] \to A$, $(s, t) \in S \times [\overline{H}]$, which facilitates the use of Lemma~\ref{lem:dephoeffding} on $V^{\pi}_{X_{\text{unbiased}}}$.



We have all the elements ready for an upper bound on the sample complexity of CEM-S.


\begin{theorem}[Sample complexity of CEM-S]
\label{thm:stat}
The CEM-S algorithm provides the relevant PAC guarantee for stationary MDP $M$ with parameters $\epsilon \in (0, V_{
max})$, $\delta \in (0, 1)$ if run with
$$N = \max\left( \left\lceil \frac{32(V_{\max})^{2}}{\epsilon^{2}} \ln \frac{|S||A|^{|S|}}{\delta}\right\rceil, \left\lceil \frac{8 |S||A| (\overline{H} - 1)V_{\max}}{\epsilon} \right\rceil \right) \overline{H}.$$
\end{theorem}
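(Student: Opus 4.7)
The plan is to mirror the proof of Theorem~\ref{thm:non-stat}, with two additional ingredients dictated by the stationary setting: a truncation step that replaces the infinite-horizon objective (on both $M$ and $\widehat{M}$) with a horizon-$\overline{H}$ objective via Proposition~\ref{prop:trunc}, and an explicit bound on the bias that worlds in $X_{\textup{biased}}$ introduce into $V^{\pi}_X$. Concretely, I would decompose the suboptimality of $\widehat{\pi}$ on $M$ at any $s \in S$ as a telescoping sum of five brackets,
\[
V^{\star}(s) - V^{\widehat{\pi}}(s) = \bigl[V^{\pi^{\star}}(s) - V^{\pi^{\star}}_{M_{\overline{H}}}(s,0)\bigr] + \bigl[V^{\pi^{\star}}_{M_{\overline{H}}}(s,0) - V^{\pi^{\star}}_{\widehat{M}_{\overline{H}}}(s,0)\bigr] + \bigl[V^{\pi^{\star}}_{\widehat{M}_{\overline{H}}}(s,0) - V^{\widehat{\pi}}_{\widehat{M}_{\overline{H}}}(s,0)\bigr] + \bigl[V^{\widehat{\pi}}_{\widehat{M}_{\overline{H}}}(s,0) - V^{\widehat{\pi}}_{M_{\overline{H}}}(s,0)\bigr] + \bigl[V^{\widehat{\pi}}_{M_{\overline{H}}}(s,0) - V^{\widehat{\pi}}(s)\bigr],
\]
and then bound each bracket above by $\epsilon/4$. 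Three brackets are handled deterministically: the first and last by Proposition~\ref{prop:trunc} directly, and the third (the ``optimization gap'' of $\widehat{\pi}$ on $\widehat{M}_{\overline{H}}$) by combining the optimality of $\widehat{\pi}$ on the infinite-horizon $\widehat{M}$ with two applications of Proposition~\ref{prop:trunc}.

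The two random brackets (second and fourth) both have the form $V^{\pi}_{\widehat{M}_{\overline{H}}}(s,0) - V^{\pi}_{M_{\overline{H}}}(s,0)$ for $\pi \in \{\pi^{\star}, \widehat{\pi}\}$. For each, I would invoke Lemma~\ref{lem:XequalsbarmMS} to replace $V^{\pi}_{\widehat{M}_{\overline{H}}}(s,0)$ by $V^{\pi}_X(s,0)$ and then split the remaining error as $(V^{\pi}_X - V^{\pi}_{X_{\textup{unbiased}}}) + (V^{\pi}_{X_{\textup{unbiased}}} - V^{\pi}_{M_{\overline{H}}})$. Lemma~\ref{lem:XunbiasedclosetoX} bounds the first term deterministically by $\epsilon/8$ provided $N \geq 8|S||A|\overline{H}(\overline{H}-1)V_{\max}/\epsilon$, which is exactly what the second argument of the maximum provides after multiplication by $\overline{H}$. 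For the second term, I would use the batch decomposition \eqref{eq:XAsBsum-s} together with Lemma~\ref{lem:s-exp} to view $V^{\pi}_{X_{\textup{unbiased}}}(s,0)$ as a convex combination of batch-averages, each being the mean of $N' = N/\overline{H}$ mutually-disjoint, hence independent, $[0,V_{\max}]$-valued, unbiased estimates of $V^{\pi}_{M_{\overline{H}}}(s,0)$. Lemma~\ref{lem:dephoeffding} then gives tail probability $\exp(-N'\epsilon^2/(32 V_{\max}^2))$ per side for deviations of $\epsilon/8$. A union bound over $|S|$ states and the $|A|^{|S|}$ stationary deterministic policies (the class to which both $\pi^{\star}$ and $\widehat{\pi}$ belong) and solving for $N'$ reproduces the first argument of the maximum.

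The main obstacle is the second argument of the maximum. In the non-stationary setting, every world in $X$ is unbiased and Hoeffding alone suffices; here, by contrast, one is forced to pay an extra deterministic sample cost---polynomial in $|S||A|\overline{H}^{2}V_{\max}/\epsilon$ but \emph{independent of $\log(1/\delta)$}---just to shrink the influence of $X_{\textup{biased}}$ below $\epsilon/8$. Once that observation and the five-bracket decomposition are in place, however, the remaining work amounts to routine accounting: verifying that the four $\epsilon/4$ increments sum to $\epsilon$, and that the two separate sample requirements are met simultaneously by the $\max$ in the theorem statement.
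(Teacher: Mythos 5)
Your proposal is correct and follows essentially the same route as the paper's proof in Appendix~\ref{app:proofofcemsupperbound}: both rest on Proposition~\ref{prop:trunc} for truncation, Lemma~\ref{lem:XequalsbarmMS} for the consistency of $X$ with $\widehat{M}_{\overline{H}}$, Lemma~\ref{lem:XunbiasedclosetoX} to absorb the biased worlds (yielding the second argument of the max), the batch decomposition \eqref{eq:XAsBsum-s} with Lemmas~\ref{lem:s-exp} and \ref{lem:dephoeffding} for concentration at level $\epsilon/8$ (yielding the first argument), and a union bound over $|S||A|^{|S|}$ events. The only difference is organizational: you telescope the suboptimality into five brackets, whereas the paper argues by contraposition from the two bad events of under-estimating $\pi^{\star}$ or over-estimating another policy by $\epsilon/2$; the error budget and constants come out identical.
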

The proof (given in detail in Appendix~\ref{app:proofofcemsupperbound}) follows the same core structure as of the non-stationary case in Theorem~\ref{thm:non-stat}, but requires additional steps to account for the truncated horizon $\overline{H}$ and the partition of $X$ into sets $X_{\text{biased}}$ and $X_{\text{unbiased}}$. 
We infer that the sample complexity of CEM-S is 
$$O\left(\frac{|S||A|}{(1 - \gamma)^{3}\epsilon^{2}} \left(\log \frac{1}{(1 - \gamma)\epsilon}\right)\left(\log \frac{1}{\delta} + |S||A| \epsilon \log \frac{1}{1 - \gamma} \right)\right).$$
Unlike existing upper bounds~\citep{Azar2013GenModel,Agarwal2020GenModel} that only hold for restricted ranges of $\epsilon$, this bound applies to the entire range of problem parameters.
Observe that the coefficient of $\log(\frac{1}{\delta})$ is $\tilde{O}\left(\frac{|S||A|}{(1 - \gamma)^{3} \epsilon^{2}}\right)$. Thus, we have the novel result that for arbitrary, fixed values of $|S|$, $|A|$, $\epsilon$, and $\gamma$, CEM-S is optimal up to logarithmic factors as $\delta \to 0$. The notion of optimality in the limit as $\delta \to 0$ has also been applied in other PAC learning contexts~\citep{pmlr-v49-garivier16a}.

\section{Sample-Complexity Lower Bound For Finite-Horizon MDPs}
\label{sec:lowerbound}
In this section, we furnish a lower bound on the sample complexity required for any PAC algorithm on finite-horizon MDPs. This bound, shown by constructing a family of stationary MDPs, applies to both stationary and to non-stationary MDPs. The basic structure is taken from \citet{Azar2013GenModel}, who provide a similar lower bound for infinite-horizon MDPs with discounting. The main change required is to substitute terms depending on discount factor $\gamma$ with terms depending on horizon $H$. We also note that the lower bound of \citet{Azar2013GenModel} applies only to a restricted class of algorithms that make ``independent'' predictions for ``independent'' state-action pairs~\cite[see Lemma 18]{Azar2013GenModel}. To obtain a more general result, we also borrow from the proof structure used by \citet{Mannor+Tsitsiklis:2004} for best-arm identification in stochastic multi-armed bandits. 

At each step, algorithm $\mathcal{L}$ has a history of (state, action, time step) triples that have already been sampled, along with their observed outcomes (next states). The algorithm must either (1) stop and publish $\hat{Q}(s, a, 0)$ for each $(s, a) \in S \times A$, or (2) specify a probability distribution over all $(s, a, t) \in S \times A \times [H]$. If the latter, an $(s, a, t)$ triple is sampled, its outcome recorded, and the process moves to the next step. For $\epsilon > 0$, the output of $\mathcal{L}$ is $\epsilon$-correct if for all $(s, a) \in S \times A$, $|Q^{\star}(s, a, 0) - \hat{Q}(s, a, 0)| \leq \epsilon$. In turn, for $\delta > 0$, $\mathcal{L}$ is an $(\epsilon, \delta)$-PAC algorithm if on each input MDP, the probability that $\mathcal{L}$ stops and returns an $\epsilon$-correct output is at least $1 - \delta$. 

Observe that we provide a lower-bound for accurately estimating $Q$-values; this is mainly for being consistent with \citet{Azar2013GenModel}. It is easily shown that the same lower bound holds (up to a constant factor) for estimating an $\epsilon$-optimal policy with probability $1 - \delta$. In our working, we use $Q^{\star}(s, a)$ and $\hat{Q}(s, a)$ to denote $Q^{\star}(s, a, 0)$ and $\hat{Q}(s, a, 0)$, respectively. Below is our formal statement.

\begin{theorem}
\label{thm:lower_bound_stationary}
Fix set of states $S$, set of actions $A$ arbitrarily, and let horizon $H > 200$. There exist an MDP $(S, A, T, R, H, \gamma = 1)$, constants $c_1>0, c_2>0$ 
such that for all $\epsilon \in (0, 1)$, $\delta \in (0, 0.5)$, any $(\epsilon, \delta)-PAC$ algorithm $\mathcal{L}$
has an expected sample complexity of at least
    \begin{align*}
    \frac{c_1 |S||A| H^3}{\epsilon^2} \ln\left( \frac{c_2}{\delta} \right)
    \end{align*}
on this MDP.
\end{theorem}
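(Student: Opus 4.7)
The plan is to establish the lower bound by constructing a family of ``hard'' stationary MDPs on which no $(\epsilon, \delta)$-PAC algorithm can correctly identify the optimal $Q$-values without taking $\Omega(H^3 / \epsilon^2 \cdot \log(1/\delta))$ samples at each of the $|S||A|$ state-action pairs. The approach follows \citet{Azar2013GenModel} in spirit, but since we want a general lower bound (not restricted to ``symmetric'' algorithms), I would wrap everything in a change-of-measure / likelihood-ratio argument in the style of \citet{Mannor+Tsitsiklis:2004}, applying Wald's identity to convert per-sample KL bounds into sample-complexity bounds for arbitrary adaptive strategies.

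\textbf{Hard-instance construction.} First I would build a base MDP $M_0$ and a family $\{M_{(s,a)}\}_{(s,a) \in S\times A}$ of perturbations, all sharing the same $S$, $A$, and horizon $H$. The construction: for each $(s,a)$, action $a$ at state $s$ transitions with probability $p = 1 - 1/H$ to a ``rewarding'' self-loop state that accrues reward $1$ per step, and with probability $1 - p$ to an absorbing zero-reward sink. Under $M_0$ the $Q$-value of each $(s,a)$ at $t=0$ is some base value proportional to $H$ via a geometric sum. In perturbation $M_{(s^\star, a^\star)}$ I would bump the self-loop transition probability at $(s^\star,a^\star)$ by $\alpha$, where $\alpha$ is chosen (of order $\epsilon/H^2$) so that $|Q^\star_{M_{(s^\star,a^\star)}}(s^\star,a^\star) - Q^\star_{M_0}(s^\star,a^\star)| \geq 4\epsilon$. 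This scaling comes from differentiating the geometric-like sum $\sum_{t=0}^{H-1} p^t$ with respect to $p$, which gives sensitivity of order $H^2$, so $\alpha \cdot H^2 \asymp \epsilon$ forces the required $Q$-gap.

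\textbf{Change-of-measure and per-pair lower bound.} Any $(\epsilon,\delta)$-PAC algorithm $\mathcal{L}$ must, on input $M_0$ versus $M_{(s^\star,a^\star)}$, declare $\hat Q(s^\star,a^\star)$ values that separate the two by at least $2\epsilon$ --- otherwise it would be wrong on one of them with probability $> \delta$. Let $N_{s^\star,a^\star}$ denote the (random) number of times $\mathcal{L}$ samples the triple $(s^\star,a^\star,0)$ before stopping. The only transitions that differ between $M_0$ and $M_{(s^\star,a^\star)}$ are those at $(s^\star,a^\star)$, and each differing sample contributes a Bernoulli KL divergence of order $\alpha^2/(p(1-p)) \asymp \epsilon^2/H^3$ (using $1-p = 1/H$). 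Applying Wald's identity to the log-likelihood ratio restricted to samples of $(s^\star,a^\star)$, and invoking the standard ``high-probability event'' inequality of Kaufmann--Cappé--Garivier / Mannor--Tsitsiklis, gives
\begin{equation*}
\mathbb{E}_{M_0}[N_{s^\star,a^\star}] \cdot \tfrac{\epsilon^2}{H^3} \;\gtrsim\; \mathrm{kl}(\delta, 1-\delta) \;\gtrsim\; \ln(1/(2.4\delta)),
\end{equation*}
so $\mathbb{E}_{M_0}[N_{s^\star,a^\star}] = \Omega(H^3/\epsilon^2 \cdot \ln(1/\delta))$.

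\textbf{Summation and polishing.} Since the perturbed instance is chosen per state-action pair and the argument above holds for every $(s^\star,a^\star) \in S \times A$ simultaneously against the single base MDP $M_0$, summing $\mathbb{E}_{M_0}[N_{s,a}]$ over all $|S||A|$ pairs yields a total expected sample complexity of $\Omega(|S||A|H^3/\epsilon^2 \cdot \ln(1/\delta))$ on $M_0$, which proves the theorem with suitable constants $c_1, c_2$. The requirement $H > 200$ just absorbs a few small-$H$ slack terms (e.g., ensuring $p = 1 - 1/H$ is close enough to $1$ that the $Q$-sensitivity calculation goes through cleanly and that a truncated geometric series is well approximated by $H$).

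\textbf{Main obstacle.} The main difficulty is the change-of-measure step for arbitrary \emph{adaptive} algorithms. The algorithm's sampling rule for $(s^\star,a^\star)$ can depend on every previous observation everywhere in the MDP, so the likelihood-ratio computation must be localised to samples at $(s^\star,a^\star)$ only --- other samples have identical distributions under $M_0$ and $M_{(s^\star,a^\star)}$ and therefore contribute $0$ to the KL. Making this precise requires a careful filtration argument (each sample is measurable w.r.t.\ the history, and the log-likelihood ratio telescopes over the random stopping time), for which the cleanest route is the lemma of Kaufmann--Cappé--Garivier bounding $\mathrm{kl}(\mathbb{P}_{M_0}(E), \mathbb{P}_{M_{(s^\star,a^\star)}}(E)) \leq \mathbb{E}_{M_0}[N_{s^\star,a^\star}] \cdot \mathrm{KL}(\text{Ber}(p) \| \text{Ber}(p+\alpha))$ applied to the event $E = \{\hat Q(s^\star,a^\star) \geq Q^\star_{M_0}(s^\star,a^\star) + 2\epsilon\}$. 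Once this inequality is in hand, the rest is arithmetic.
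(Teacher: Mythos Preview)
Your proposal is correct and follows the same high-level structure as the paper's proof: the same hard-instance family (a base MDP $M_0$ with self-loop probability $p = 1 - 1/H$, and per-pair perturbations by $\alpha \asymp \epsilon/H^2$ giving a $Q$-gap of order $\epsilon$), the same per-pair KL of order $\epsilon^2/H^3$, and the same summation over state-action pairs. The genuine difference is in how you execute the change-of-measure step. The paper works ``by hand'': it defines a high-probability event $\mathcal{E}$ on the empirical count of heads via Chernoff--Hoeffding, uses Markov's inequality to localise to $\{\tau_i \le 10\,\mathbb{E}[\tau_i]\}$, and then explicitly lower-bounds the likelihood ratio $L_i(W)/L_0(W)$ on the intersection of these events (their Lemma~11), finally pushing the bound through $\mathbb{P}_i(E_0) = \mathbb{E}_0[(L_i/L_0)\mathbf{1}_{E_0}]$. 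You instead invoke the Kaufmann--Capp\'e--Garivier transportation inequality together with Wald's identity, which packages all of that into one line and handles adaptive stopping cleanly without the Markov-plus-good-event detour. Your route is shorter and more robust (the constants fall out automatically from $\mathrm{kl}(\delta, 1-\delta)$), while the paper's explicit computation makes the dependence on $p$, $\alpha$, and the particular constants ($c_1' = 20$, $c_2' = 6$, $H > 200$) fully visible. One small slip: your $N_{s^\star,a^\star}$ should count samples of $(s^\star,a^\star)$ at \emph{all} time steps (the MDP is stationary, so the perturbation affects every $t$), not just $t = 0$; this does not change the argument.
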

The full proof of this theorem is provided in Appendix~\ref{app:lowerboundproof}. The proof relies on a purposefully-designed family of MDPs constructed by \citet{Azar2013GenModel}. These authors consider two specific MDPs, $M_{0}$ and $M_{1}$, whose $Q$-functions are more than $2\epsilon$ apart, and hence any estimate cannot simultaneously be $\epsilon$-correct for both MDPs. On the other hand, since the MDPs are sufficiently close, an agent cannot distinguish them based on observed samples alone unless the sample size is sufficiently large. Consequently, unless a sufficient number of samples are observed, an algorithm must necessarily be non-$(\epsilon, \delta)$-PAC.

Our proof follows this same structure, except (1) we use a finite horizon $H$ for the MDP family, and (2) we consider $\frac{k}{3} + 1$ MDPs where $k$ is the number of state-action pairs. The latter adaptation lets us extend the lower bound to arbitrary algorithms. In our proof, there is a base MDP $M_{0}$ and for each state-action pair $i$ that represents a choice, an MDP $M_{i}$ that differs from $M_{0}$ only at this state-action pair.
We lower-bound the ratio of the likelihood of observing the same data from these MDPs in terms of the number of samples. If the sample complexity is ``small'', the likelihood ratio bound allows us to show that if a certain event has a high probability under MDP $M_0$, it also has a high probability under $M_i$. By choosing this event to be the event that $|\hat{Q} - Q_0^*| < \epsilon$,
we argue that if an algorithm gives an $\epsilon$-correct answer with probability at least $1 - \delta$ on $M_{0}$, it gives this same (but now \textit{not} $\epsilon$-correct) answer on $M_{i}$ with probability at least $1 - \delta$. Hence an algorithm cannot be PAC unless the sample complexity is ``large''---as quantified in Theorem~\ref{thm:lower_bound_stationary}.

\section{Conclusion}
\label{sec:conclusion}

In this paper, we bring to light a surprising connection between the well known certainty-equivalence method (CEM) for PAC RL, and the trajectory tree method (TTM) for decision-time planning. We show that CEM implicitly computes a policy that simultaneously optimises over all possible ``batches'' of worlds, whereas TTM explicitly sets up a single batch of trajectory trees (functionally akin to worlds) to compute its policy. Noticing this connection, we establish upper bounds for CEM using only Hoeffding's inequality, yet which improve upon current bounds in the regime of small $\delta$. Our results are especially significant in the finite-horizon (non-stationary) setting, where in spite of making a weaker assumption on the rewards, we show the minimax-optimality of CEM in the small-$\delta$ regime.

Our new perspective sets up several possible directions for future work, including (1) the derivation of instance-specific upper bounds for sequential PAC RL algorithms, and (2) generalising the idea to formalisms that use function approximation.
Finally, (3) it would be worth investigating the applicability of our analytical framework to related on-line learning problems,  such as exploration in continuing tasks without ``reset'' access~\citep{Brafman2002RMAX,Strehl2008MBIE}, the episodic off-policy setting~\citep{Yin2021FiniteH_AISTATS}, and regret minimisation
~\citep{Auer2006UCRL,DannBrunskill2017}.

\bibliographystyle{apalike}
\bibliography{main}


\onecolumn
\appendix
\section{Proofs of Lemma~\ref{lem:XequalsbarmMNS} and Lemma~\ref{lem:XequalsbarmMS}}
\label{app:proofsofXequalsM}

For proving Lemma~\ref{lem:XequalsbarmMNS}, we have to show that for $\pi: S \times [H] \to A$, $(s, t) \in S \times [H]$: $V^{\pi}_{X}(s, t) = V^{\pi}_{\widehat{M}}(s, t).$ We use induction on $t$. For the base case of $t = H - 1$, notice that $V^{\pi}_{X}(s, H - 1) = R(s, \pi(s, H - 1), H - 1) = V^{\pi}_{\widehat{M}}(s, H - 1)$. Suppose that the claim is true for $t + 1$, where $t \in \{0, 1, \dots, H - 2\}$. We have
\begin{align*}
V^{\pi}_{X}(s, t) &= \frac{1}{|X|} \sum_{x \in X} V^{\pi}_{x}(s, t) \\
&= \frac{1}{|X|} \sum_{x \in X} \left( R(s, \pi(s, t), t) + \gamma \sum_{s^{\prime} \in S} T_{x}(s, \pi(s, t), t, s^{\prime} ) V^{\pi}_{x}(s^{\prime}, t + 1)\right) .
\end{align*}
Recall that $x$ is a string of length $|S||A|H$ over $[N]$, organised as $H$ segments, with each $|S||A|$-length segment specifying the index of the sample from $[N]$ for each state-action pair. Let
\begin{itemize}
\item $x_{-}$ be $x$'s prefix of length $|S||A|t$ (empty if $t = 0$);
\item $x_{\circ}$ be the subsequent ``middle portion'' of length $|S||A|$, and \item $x_{+}$ be the suffix of length $|S||A|(H - t - 1)$.
\end{itemize}

In other words, $x = x_{-} x_{\circ} x_{+}$ for some $x_{-} \in X_{-} \eqdef [N]^{|S||A|t}$, $x_{\circ} \in X_{\circ} \eqdef [N]^{|S||A|}$, $x_{+} \in X_{+} \eqdef [N]^{|S||A|(H - t - 1)}$. We continue based on this decomposition:

\begin{align*}
V^{\pi}_{X}(s, t) &= R(s, \pi(s, t), t) 
+ \frac{\gamma}{|X|} \sum_{s^{\prime} \in S} \sum_{x_{-} \in X_{-}} \sum_{x_{\circ} \in X_{\circ}} \sum_{x_{+} \in X_{+}} T_{x_{-}x_{\circ}x_{+}}(s, \pi(s, t), t, s^{\prime} ) V^{\pi}_{x_{-}x_{\circ}x_{+}}(s^{\prime}, t + 1) .
\end{align*}
Now, $T_{x_{-}x_{\circ}x_{+}}(s, \pi(s, t), t, s^{\prime})$ does not depend on $x_{-}$ or $x_{+}$, so we can simply denote it $T_{x_{\circ}}(s, \pi(s, t), t, s^{\prime} )$. 

Similarly, $V^{\pi}_{x_{-}x_{\circ}x_{+}}(s^{\prime}, t + 1)$ does not depend on $x_{-}$ or $x_{\circ}$, and so we may denote it $V^{\pi}_{x_{+}}(s^{\prime}, t + 1)$. With this  observation, we see that
\begin{align*}
V^{\pi}_{X}(s, t) &= R(s, \pi(s, t), t) 
+ \\
&\frac{\gamma}{|X|} \sum_{s^{\prime} \in S} |X_{-}| \left(\sum_{x_{\circ} \in X_{\circ}}
T_{x_{\circ}}(s, \pi(s, t), t, s^{\prime} ) \right) \left( \sum_{x_{+} \in X_{+}}    V^{\pi}_{x_{+}}(s^{\prime}, t + 1) \right).
\end{align*}
The term $\sum_{x_{\circ} \in X_{\circ}}
T_{x_{\circ}}(s, \pi(s, t), t, s^{\prime} )$ is seen to be $$N^{|S||A| - 1} \text{\text{count}}(s, \pi(s, t), t, s^{\prime}) = N^{|S||A|} \widehat{T}(s, \pi(s, t), t, s^{\prime} );$$ 
the $N^{|S||A| - 1}$ factor comes from other state-action pairs for time step $t$ taking every possible index from $[N]$ in the set of worlds. Since $V^{\pi}_{x_{+}}(s^{\prime}, t + 1)$ does not depend on $x_{-}$ and $x_{\circ}$, but $x_{-}$ and $x_{\circ}$ take all possible values for each $x_{+} \in X_{+}$, we substitute  $\sum_{x_{+} \in X_{+}} V^{\pi}_{x_{+}}(s^{\prime}, t + 1)$ with $$\frac{|X_{+}|}{|X|} \sum_{x^{\prime} \in X}    
V^{\pi}_{x'}(s^{\prime}, t + 1) = |X_{+}|V^{\pi}_{X}(s^{\prime}, t + 1)
= |X_{+}|V^{\pi}_{\widehat{M}}(s^{\prime}, t + 1),$$ where the last step applies the induction hypothesis. In aggregate, we now have
\begin{align*}
V^{\pi}_{X}(s, t) &= R(s, \pi(s, t), t) 
+ \frac{\gamma}{|X|} \sum_{s^{\prime} \in S} |X_{-}| N^{|S||A|} \widehat{T}(s, \pi(s, t), t, s^{\prime}) |X_{+}| V^{\pi}_{\widehat{M}}(s^{\prime}, t + 1)\\
&= R(s, \pi(s, t), t) + \gamma \sum_{s^{\prime} \in S} \widehat{T}(s, \pi(s, t), t, s^{\prime}) V^{\pi}_{\widehat{M}}(s^{\prime}, t + 1) = V^{\pi}_{\widehat{M}}(s, t),
\end{align*}
which completes the proof of Lemma~\ref{lem:XequalsbarmMNS}.

The proof of Lemma~\ref{lem:XequalsbarmMS} is identical: we only have to ignore the time-independence of $R$ and $T$, and take $\overline{H}$ as horizon instead of $H$.


\section{Counting with Batches}
\label{app:batch-counting}

We compute the sizes of the set of all batches $B$, and the set of all batches containing some fixed world $x$. For the non-stationary setting, the set of worlds relevant to this exercise is the universe $X$, whereas for the stationary setting, counting is only done on $X_{\text{unbiased}}$.

\subsection{Non-Stationary Setting}

A batch contains $N$ worlds such that for any pair of distinct worlds $x, x^{\prime}$ in $X$, there is no $(s, a, t) \in S \times A \times [H]$ such that $x(s, a, t) = x^{\prime}(s, a, t)$. To count the total number of such batches possible, it is helpful to visualise a table with $N$ rows and $|S||A|H$ columns. In how many ways can we fill up the cells with numbers from $[N]$ such that no two rows share a common element in any column?

\begin{center}
\begin{tabular}{|c|c|c|c|c|c|}
\hline
& ~~$1$~~ & ~~$2$~~ & ~~$3$~~ & ~$\dots$~ & ~$|S||A|H$~ \\ \hline
~$1$~ & & & & &\\ \hline
~$2$~ & & & & &\\ \hline
~$3$~ & & & & &\\ \hline
~$\vdots$~ & & & & & \\ \hline
~$N$~ & & & & & \\ \hline
\end{tabular}
\end{center}

We proceed row by row. The first row can be filled up in $N^{|S||A|H}$ possible ways. With the first row filled, the second row can be filled in $(N - 1)^{|S||A|H}$ possible ways. We proceed in this manner, until entries for the last row are fixed by those from the preceding $N - 1$ rows. Hence, the number of possible tables we could have created is $$N^{|S||A|H} \times (N - 1)^{|S||A|H} \times (N - 2)^{|S||A|H} \times 1^{|S||A|H} = N!^{|S||A|H}.$$ A batch is a \textit{set} rather than a sequence---so any two tables that contain the same contents in each row, even if the rows are permuted, fall in the same equivalence class of a batch. Since there are $N!$ possible ways to permute the rows of the table, we observe that the number of unique batches is $|B| = \frac{N!^{|S||A|H}}{N!} = N!^{|S||A|H - 1}.$

If a particular world (here row) $x$ is fixed, the same reasoning implies the number of batches containing $x$ is $|B_{x}| = (N - 1)!^{|S||A|H - 1}$.

\subsection{Stationary Setting}

Recall that world $x \in X$ is \textit{biased} if there exist $(s, a, t, t^{\prime}) \in S \times A \times [\overline{H}] \times [\overline{H}]$, $t \neq t^{\prime}$, such that $x(s, a, t) = x(s, a, t^{\prime})$. Hence, in an unbiased world $x \in X_{\text{unbiased}}$, for each $(s, a) \in S \times A$, we must pick distinct samples for each $t \in [\overline{H}]$. The number of ways to select an $\overline{H}$-sized permutation from $[N]$ is $N(N - 1)(N - 2)\dots(N - \overline{H} + 1) = \frac{N!}{(N - \overline{H})!}.$Since we gather such a permutation for each $(s, a) \in S \times A$, the size of $X_{\text{unbiased}}$ is $\left(\frac{N!}{(N - \overline{H})!}\right)^{|S||A|}.$

In the stationary setting, we consider batches of size  $N^{\prime} = N/\overline{H}$, which are mutually-disjoint sets of worlds from $X_{\text{unbiased}}$. In a world $x \in X_{\text{unbiased}}$, no index from $[N]$ can repeat within the sub-table for each state-action pair. Other than for that constraint, we can calculate the size of $B$ as before. 

To fill up the first row of the table, we must pick some $\overline{H}$ indices from $[N]$ for each state-action pair. The number of ways we can do this is 
$$\left(N(N - 1)(N - 2)\dots(N - \overline{H} + 1)\right)^{|S||A|}.$$ None of the $\overline{H}$ indices used for any state-action pair in the first row can be used subsequently. Correspondingly, the number of ways to fill up the second row becomes
$$\left((N - \overline{H})(N - \overline{H} - 1)(N - \overline{H} - 2)\dots(N - 2 \overline{H} + 1)\right)^{|S||A|}.$$ Proceeding similarly, the number of ways to fill up the last  ($N^{\prime}$-th) row is $$\left((N - k\overline{H})(N - k\overline{H} - 1)(N - k\overline{H} - 2)\dots(N - (k +1) \overline{H} + 1)\right)^{|S||A|}$$ for $k = N^{\prime} - 1$. Taking a product over rows, we observe that the total number of ways to fill it up is $N!^{|S||A|}$. Once again, we must account for permutations of the rows, which are $N^{\prime}$ in number. Hence, the number of batches $|B| = \frac{N!^{|S||A|}}{N^{\prime}!}.$

If we fix a particular row with world $x$, it leaves $N^{\prime} - 1$ rows to fill out, but only $N - \overline{H}$ indices available for each state-action pair. Repeating the same counting argument, we get $|B_{x}| = \frac{(N - \overline{H})!^{|S||A|}}{(N^{\prime} - 1)!}.$

\section{Proof of Lemma~\ref{lem:dephoeffding}}
\label{app:dephoeffdingproof}

The lemma and the proof are from \citet[see Section 5]{Hoeffding:1963}. The proof is rephrased here for the reader's convenience. For arbitrary $h > 0$, we have
\begin{align}
\mathbb{P}\{U \geq \mathbb{E}[U]  + \gamma\}
&= 
\mathbb{P}\{h(U - \mathbb{E}[U])  \geq h \gamma\} \nonumber \\
&= 
\mathbb{P}\{\exp(h (U - \mathbb{E}[U] )) \geq \exp(h\gamma))\}.
\label{eqn:dh-1}
\end{align}
Applying Markov's Inequality to the non-negative random variable $\exp(h (U - \mathbb{E}[U] ))$, we observe 
\begin{align}
\mathbb{P}\{\exp(h (U - \mathbb{E}[U] )) \geq \exp(h\gamma))\}
&\leq \exp(-h \gamma) \mathbb{E}[\exp(h (U - \mathbb{E}[U] ))] \nonumber \nonumber \\
&= \exp(-h \gamma)
\mathbb{E}\left[\exp\left(h \sum_{i = 1}^{\ell} p_{i}(U_{i} - \mathbb{E}[U_{i}])\right)\right].
\label{eqn:dh-2}
\end{align}
In turn, since the exponential function is convex, by Jensen's inequality,
\begin{align}
\exp\left(h \sum_{i = 1}^{\ell} p_{i}(U_{i} - \mathbb{E}[U_{i}])\right) \leq \sum_{i = 1}^{\ell} p_{i} \exp (h (U_{i} - \mathbb{E}[U_{i}])).
\label{eqn:dh-3}
\end{align}
Combining \eqref{eqn:dh-1}, \eqref{eqn:dh-2}, and \eqref{eqn:dh-3}, we have
\begin{align}
\mathbb{P}\{U \geq \mathbb{E}[U] + \gamma\}
&\leq \exp(-h \gamma )
\sum_{i = 1}^{\ell} p_{i} \mathbb{E}[\exp (h (U_{i} - \mathbb{E}[U_{i}]))].
\label{eqn:dh-4}
\end{align}
The application of Jensen's inequality to obtain \eqref{eqn:dh-3} above is not needed for the common application of Hoeffding's Inequality to averages of \textit{independent} random variables, which is the case for us if $\ell = 1$. For the general case, \eqref{eqn:dh-3} enables us to upper-bound the deviation probability by a convex combination of expectations, one corresponding to each $i \in \{1, 2, \dots, \ell\}$. Since each $U_{i}$ is indeed an average of independent random variables, each expectation can be upper-bounded exactly as in the proof of the common variant~\cite[see Theorem 2]{Hoeffding:1963}. For $i \in \{1, 2, \dots, \ell\}$,
\begin{align}
\mathbb{E}[\exp (h (U_{i} - \mathbb{E}[U_{i}]))]
&= \mathbb{E}\left[\exp \left(\frac{h}{m} \sum_{j = 1}^{m} (U_{i, j} - \mathbb{E}[U_{i, j}]\right)\right]\nonumber \\
&= \mathbb{E}\left[ \prod_{j = 1}^{m}  \exp\left(\frac{h}{m} (U_{i, j} - \mathbb{E}[U_{i, j}])\right)\right]\nonumber \\
&= \prod_{j = 1}^{m}  \mathbb{E}\left[ \exp\left(\frac{h}{m} (U_{i, j} - \mathbb{E}[U_{i, j}])\right)\right],\label{eqn:dh-5}
\end{align}
where the last step follows from the independence of $U_{i, j}$ and $U_{i, j^{\prime}}$ for 
$j, j^{\prime} \in \{1, 2, \dots, m\}$,
$j \neq j^{\prime}$. At this point, we apply a mathematical fact that is sometimes called Hoeffding's lemma~\cite[see Section 4 for proof]{Hoeffding:1963}. Noting that $U_{i, j}$ for $i \in \{1, 2, \dots, \ell\}, j \in \{1, 2, \dots, m\}$ is bounded in $[\alpha, \beta]$, we have
\begin{align}
\mathbb{E}\left[ \exp\left(\frac{h}{m} (U_{i, j} - \mathbb{E}[U_{i, j}])\right)\right] &\leq \exp\left(\frac{h^{2}(\beta - \alpha)^{2}}{8 m^{2}}\right).
\label{eqn:dh-6}
\end{align}
Combining \eqref{eqn:dh-4}, \eqref{eqn:dh-5}, and \eqref{eqn:dh-6}, we obtain
\begin{align}
\mathbb{P}\{U \geq \mathbb{E}[U] + \gamma\}
&\leq \exp(-h \gamma ) \sum_{i = 1}^{\ell} p_{i} \prod_{j = 1}^{m} \exp\left(\frac{h^{2}(\beta - \alpha)^{2}}{8 m^{2}}\right) \nonumber \\
&= \sum_{i = 1}^{\ell} p_{i} \exp\left(-h \gamma + \frac{h^{2}(\beta - \alpha)^{2}}{8 m}\right).\label{eqn:dh-7}
\end{align}
Substituting the particular choice of
$h = \frac{4 \gamma m}{(\beta - \alpha)^{2}}$ in \eqref{eqn:dh-7}, we conclude
\begin{align}
\mathbb{P}\{U \geq \mathbb{E}[U] + \gamma\}
&\leq \sum_{i = 1}^{\ell} p_{i}
\exp\left(\frac{-2 m \gamma^{2}}{(\beta - \alpha)^{2}} \right) \nonumber \\
&= \exp\left(\frac{-2 m \gamma^{2}}{(\beta - \alpha)^{2}} \right).
\label{eqn:dh-8}
\end{align}
For $i \in \{1, 2, \dots, \ell\}$ and $j, j^{\prime} \in \{1, 2, \dots, m\}$, $j \neq j^{\prime}$, our precondition that $U_{i, j}$ and $U_{i, j^{\prime}}$ are independent implies that $-U_{i, j}$ and $-U_{i, j^{\prime}}$ are also independent. Also note that $-U_{i, j}$ is supported on $[-\beta, -\alpha]$. We can therefore present the same proof as above to obtain $$\mathbb{P}\{-U \geq \mathbb{E}[-U] + \gamma\}
\leq \exp\left(\frac{-2 m \gamma^{2}}{(-\alpha + \beta)^{2}} \right),$$ or equivalently, 
\begin{align}
\mathbb{P}\{U \leq \mathbb{E}[U] - \gamma\} \leq \exp\left(\frac{-2 m \gamma^{2}}{(\beta -\alpha)^{2}} \right).
\label{eqn:dh-9}
\end{align}
\eqref{eqn:dh-8} and \eqref{eqn:dh-9} complete the proof of the lemma.

\section{Proof of Lemma~\ref{lem:XunbiasedclosetoX}}
\label{app:proofoflemmabiasedsize}

We need to show that for $\pi: S \times [\overline{H}] \to A$, $(s, t) \in S \times [\overline{H}]$:
$|V^{\pi}_{X}(s, t) - V^{\pi}_{X_{\text{unbiased}}}(s, t)| \leq \frac{|S||A|\overline{H} (\overline{H} - 1) V_{\max}}{N}.$

First we decompose $V^{\pi}_{X}(s, t)$ for $\pi \in S \times [\overline{H}] \to A$ and $(s, t) \in S \times [\overline{H}]$ as
\begin{align*}
V^{\pi}_{X}(s, t) 
&= \frac{1}{|X|} \sum_{x \in X} V^{\pi}_{x}(s, t) 
=
\frac{|X_{\text{unbiased}}|}{|X|}
V^{\pi}_{X_{\text{unbiased}}}(s, t) 
+
\frac{|X_{\text{biased}}|}{|X|}
V^{\pi}_{X_{\text{biased}}}(s, t) \\
& = V^{\pi}_{X_{\text{unbiased}}}(s, t) 
+
\frac{|X_{\text{biased}}|}{|X|}
\left(V^{\pi}_{X_{\text{biased}}}(s, t) - V^{\pi}_{X_{\text{unbiased}}}(s, t) \right),
\end{align*}
which implies
$$|V^{\pi}_{X}(s, t) - V^{\pi}_{X_{\text{unbiased}}}(s, t)| =  
\frac{|X_{\text{biased}}|}{|X|}
|V^{\pi}_{X_{\text{biased}}}(s, t) - V^{\pi}_{X_{\text{unbiased}}}(s, t) |.$$ $|V^{\pi}_{X_{\text{biased}}}(s, t) - V^{\pi}_{X_{\text{unbiased}}}(s, t) |$ is at most $V_{\text{max}}$. The proof is completed by observing     
\begin{align}
\frac{|X_{\text{biased}}|}{|X|}
&= 1 - \frac{|X_{\text{unbiased}}|}{|X|} = 1 - \frac{(N (N - 1)\dots(N - \overline{H} + 1))^{|S||A|}}{N^{|S||A|\overline{H}}}\nonumber\\
&\leq 1 - \left( 1 - \frac{\overline{H} - 1}{N}\right)^{|S||A|\overline{H}}
\leq \frac{|S||A|\overline{H}(\overline{H} - 1)}{N}.\nonumber
\end{align}

\section{Proof of Theorem~\ref{thm:stat}}
\label{app:proofofcemsupperbound}

CEM-S returns $\widehat{\pi}: S \to A$, which is an optimal policy for $\widehat{M}$. If 
$\widehat{\pi}$ is not $\epsilon$-optimal (for $M$), then either (i) $\widehat{M}$  has under-estimated $\pi^{\star}$ by at least $\frac{\epsilon}{2}$ for some state $s \in S$, or (ii) $\widehat{M}$ has over-estimated some other policy $\pi: S \to A$ by at least $\frac{\epsilon}{2}$ for some state $s \in S$. We upper-bound the probability of these events, in turn.

Suppose for state $s \in S$, we have $V^{\pi^{\star}}_{\widehat{M}}(s) \leq V^{\star}(s) - \frac{\epsilon}{2}$. Proposition~\ref{prop:trunc} then implies $V^{\pi^{\star}}_{\widehat{M}_{\overline{H}}}(s, 0) \leq V^{\star}(s) - \frac{\epsilon}{2}$. Again from  Proposition~\ref{prop:trunc}, $V^{\pi^{\star}}_{\widehat{M}_{\overline{H}}}(s, 0) \leq V^{\star}_{M_{\overline{H}}}(s. 0) - \frac{\epsilon}{4}$. Equivalently, from Lemma~\ref{lem:XequalsbarmMS}, we get  $V^{\pi^{\star}}_{X}(s, 0) \leq V^{\star}_{M_{\overline{H}}}(s, 0) - \frac{\epsilon}{4}$. If we now apply Lemma~\ref{lem:XunbiasedclosetoX} using the specified value of $N$, we conclude that $V^{\pi^{\star}}_{X_{\text{unbiased}}}(s, 0) \leq V^{\star}_{M_{\overline{H}}}(s, 0) - \frac{\epsilon}{8}$. We upper-bound the probability of this consequent using Lemma~\ref{lem:dephoeffding}, after observing that $V^{\pi^{\star}}_{X_{\text{unbiased}}}(s, 0)$ can be rewritten as a convex combination over batches (from \eqref{eq:XAsBsum-s}), and its expected value is $V^{\pi^{\star}}_{M_{\overline{H}}}(s, 0)$ (from Lemma~\ref{lem:s-exp}). Defining $\delta^{\prime} \eqdef \frac{\delta}{|S||A|^{|S|}}$ and recalling that each batch has $N^{\prime} = N / \overline{H}$ worlds, it follows from Lemma~\ref{lem:dephoeffding} that the probability of $\pi^{\star}$ being under-estimated by at least $\frac{\epsilon}{2}$ for state $s$ is at most $\delta^{\prime}$.

Now fix state $s \in S$ and a policy $\pi: S \to A$ other than $\pi^{\star}$. By a symmetric working to the preceding one for $\pi^{\star}$, we obtain that the probability $\pi$ is over-estimated by at least $\frac{\epsilon}{2}$ on $s$ is again at most $\delta^{\prime}$. Since we have considered $|A|^{|S|}$ policies and $|S|$ states, a union bound restricts the mistake probability to $|S||A|^{|S|}\delta^{\prime} = 
\delta$.
\section{Proof of Lower Bound from Section~\ref{sec:lowerbound}}
\label{app:lowerboundproof}

In this section, we fill in the full proof of Theorem~\ref{thm:lower_bound_stationary}.
First we describe a family of MDPs constructed to achieve this lower bound, and then follow with the proof. Both the construction and the proof are closely based on those from \citet{Azar2013GenModel}.


\subsection{Family of MDPs}

\begin{figure}[b]
\vspace{-0.5cm}
\begin{center}
\begin{tikzpicture}[scale=0.4,
every edge/.style={
        draw
        },
every loop/.style={},
el/.style = {inner sep=2pt, align=left, sloped},
]
\pgfdeclaredecoration{ignore}{final}
{
\state{final}{}
}
\pgfdeclaremetadecoration{middle}{initial}{
    \state{initial}[
        width={(\pgfmetadecoratedpathlength - \the\pgfdecorationsegmentlength)/2},
        next state=middle
    ]
    {\decoration{moveto}}

    \state{middle}[
        width={\the\pgfdecorationsegmentlength},
        next state=final
    ]
    {\decoration{curveto}}

    \state{final}
    {\decoration{ignore}}
}
\pgfdeclaremetadecoration{left}{initial}{
    \state{initial}[
        width={(\pgfmetadecoratedpathlength - \the\pgfdecorationsegmentlength)/10},
        next state=middle
    ]
    {\decoration{moveto}}

    \state{middle}[
        width={\the\pgfdecorationsegmentlength},
        next state=final
    ]
    {\decoration{curveto}}

    \state{final}
    {\decoration{ignore}}
}
\large
    \tikzset{state/.style={circle,draw=black,minimum size=8mm,thick}}
    \node[state] at (0, 0) (s0){$x_i$};
    \node[state] at (4, 8) (s01){$y_{i1}$};
    \node[state] at (4, 3) (s02){$y_{i2}$};
    \node[state] at (4, -6) (s03){$y_{iL}$};
    \node[state] at (11, 8) (s11){$y^2_{i1}$};
    \node[state] at (11, 3) (s22){$y^2_{i2}$};
    \node[state] at (11, -6) (s33){$y^2_{iL}$};
    \coordinate (s1) at (0, 8);
    \coordinate (s2) at (0, -6);
    \path[->]
    (s0) edge [thick] node[left,pos=0.8] {$a_{1}$}  (s01);
    \path[->]
    (s0) edge [thick] node[left,pos=0.8] {$a_{2}$}  (s02);
    \path[->]
    (s0) edge [thick] node[left,pos=0.8] {$a_{L}$}  (s03);
    \path[->]
    (s01) edge [loop above, thick] node[right,pos=0.7] {$p_{i1}$}  (s01);
    \path[->]
    (s02) edge [loop above, thick] node[right,pos=0.7] {$p_{i2}$}  (s02);
    \path[->]
    (s03) edge [loop above, thick] node[right,pos=0.7] {$p_{iL}$}  (s03);
    \path[->]
    (s01) edge [thick] node[below] {$1 - p_{i1}$}  (s11);
    \path[->]
    (s02) edge [thick] node[below] {$1 - p_{i2}$}  (s22);
    \path[->]
    (s03) edge [thick] node[below] {$1 - p_{iL}$}  (s33);
    \path[->]
    (s02) edge [-, thick, dotted, decorate, decoration={middle}, segment length=10mm] node[left,pos=0.8] {}  (s03);
    \path[->]
    (s22) edge [-, thick, dotted, decorate, decoration={middle}, segment length=10mm] node[left,pos=0.8] {}  (s33);
    \path[->]
    (s1) edge [-, thick, dotted, decorate, decoration={left}, segment length=15mm] node[left,pos=0.8] {}  (s0);
    \path[->]
    (s2) edge [-, thick, dotted, decorate, decoration={left}, segment length=10mm] node[left,pos=0.8] {}  (s0);
\normalsize
  \end{tikzpicture}
\end{center}
\vspace{-0.5cm}
\caption{The MDP setup used to derive the lower bound.}

\label{fig:mdp_construction}
\vspace{-0.5cm}
\end{figure}
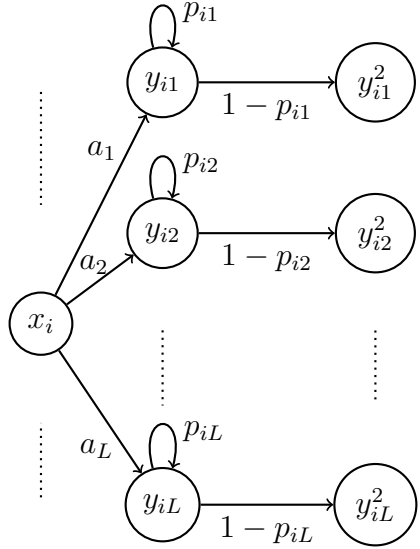


Consider an MDP $M_{ab}(p, \alpha, H)$ defined as follows (see Figure~\ref{fig:mdp_construction}).
\begin{enumerate}
    \item $K$ initial states $X = \{x_i : i \in [1, K]\}$.
    \item $L$ initial actions $A = \{ a_j : j \in [1, L] \}$ from each initial state $x_i$. These actions are deterministic and yield a reward of $0$. Taking action $a_j$ from state $x_i$ leads us to state $y_{ij}$.
    \item $KL$ secondary states $Y^1 = \{ y_{ij} : i \in [1,K], j \in [1,L] \}$, each with only one action. This is the only set of states  in the MDP with a non-deterministic action. With probability $p_{ij}$ we stay at the same state, and with $1-p_{ij}$ we go to a corresponding state in $Y^2= \{ y^2_{ij} : i \in [1,K], j \in [1,L] \}$. The reward of this action is always $1$.
    \item \[ p_{ij}= 
        \begin{cases}
            p+\alpha,& \text{if } i = a, j = b,\\
            p,  & \text{otherwise.}
        \end{cases} \]
    \item $KL$ ``terminal'' states $Y^2$, each with only one action---looping back to the state and having a reward of $0$ (not shown).
    \item A horizon of $H$, after which the episode ends.
    \item No discounting of rewards (that is, $\gamma = 1$).
\end{enumerate}
We also define another MDP $M_0(p, \alpha, H)$ where $p_{ij} = p$ for all $i,j$, and then define the set of MDPs
\[\mathcal{M}(p, \alpha, H) = \{M_0(p, \alpha, H)\} \cup \{ M_{ab}(p, \alpha, H) : a \in [1,K], b \in [1,L] \}.\] We use $i \in [1,KL]$ interchangeably with $\{i,j\} : i \in [1,K], j \in [1,L]$. Note that the total number of state-action pairs is $k \eqdef 3KL$.

We establish some properties of our family of MDPs before proceeding to the proof of Theorem~\ref{thm:lower_bound_stationary}.

\begin{lemma}
    \label{lem:mdp_closed_form}
    The optimal value function for MDPs in the class $\mathcal{M}(p, \alpha, H)$ is given by:
    \begin{align*}
        V_i^*(y_j, 0) &= \begin{cases}
                \dfrac{1 - (p+\alpha)^H}{1-(p+\alpha)},& \text{if } i = j,\\
                \dfrac{1 - p^H}{1-p},  & \text{otherwise.}
            \end{cases} \numberthis \label{eqn:mdp_closed_form}
    \end{align*}
\end{lemma}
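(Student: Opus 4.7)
The plan is to proceed by straightforward backward induction on the time step, exploiting the very simple structure of the secondary states $y_{ij}$. Since each $y_{ij}$ admits only a single action, the Bellman optimality equation at these states reduces to a deterministic scalar recurrence with no $\max$ to resolve, so no combinatorial policy analysis is required.

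First I would observe that the terminal states $y^2_{ij}$ have value identically $0$ at every time step: their unique action loops back with reward $0$, so $V^*(y^2_{ij}, t) = 0$ for all $t \in [H]$. With this in hand, applying the Bellman equation at a secondary state $y_{ij}$ yields
\begin{equation*}
V^*(y_{ij}, t) \;=\; 1 \;+\; p_{ij}\, V^*(y_{ij}, t+1) \;+\; (1-p_{ij})\, V^*(y^2_{ij}, t+1) \;=\; 1 \;+\; p_{ij}\, V^*(y_{ij}, t+1),
\end{equation*}
with base case $V^*(y_{ij}, H) = 0$.

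Next I would unroll this linear recurrence: a straightforward induction on the number of remaining steps $H - t$ gives $V^*(y_{ij}, t) = \sum_{\tau=0}^{H-t-1} p_{ij}^\tau$, and summing the geometric series yields $V^*(y_{ij}, 0) = \frac{1 - p_{ij}^H}{1 - p_{ij}}$. Finally I would substitute the definition of $p_{ij}$ from the construction of $\mathcal{M}(p, \alpha, H)$: under MDP $M_i$ (identified with a pair $(a,b)$), we have $p_{ij} = p + \alpha$ exactly when $j = i$, and $p_{ij} = p$ otherwise. Plugging these values into the closed form gives precisely~\eqref{eqn:mdp_closed_form}, and the analogous substitution for $M_0$ uses $p_{ij} = p$ uniformly.

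I do not anticipate a substantive obstacle: the only subtlety is recognising that no optimisation occurs at the secondary states (a single action) and that the terminal states contribute nothing, after which the result is just the sum of a finite geometric series. The statement of the lemma concerns only the values at $y_j$, so we need not separately compute $V^*(x_i, 0)$, which would involve the trivial maximisation over the deterministic actions $a_1, \dots, a_L$ from $x_i$.
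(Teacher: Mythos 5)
Your proof is correct and is exactly the verification the paper has in mind: the paper itself offers no argument beyond "this fact is easily verified," and your backward induction (zero value at the terminal states $y^2_{ij}$, the scalar recurrence $V^*(y_{ij},t) = 1 + p_{ij}V^*(y_{ij},t+1)$ at the single-action secondary states, and the resulting geometric series $\frac{1-p_{ij}^{H}}{1-p_{ij}}$ with $p_{ij}$ substituted per the construction) is the natural and essentially unique way to carry it out.
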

    This fact is easily verified. Note that the definition also holds for $M_0$ because $j$ is never equal to $0$.

\begin{lemma}
    \label{lem:MDP_distance_two_eps}
    The optimal value functions for the MDPs $M_i(p = 1 - \frac{1}{H}, \alpha = \frac{40\epsilon}{H^2}, H)$ and $M_0(p = 1 - \frac{1}{H}, \alpha = \frac{40\epsilon}{H^2}, H)$ (with $\epsilon<1, H>200$) are far apart: for $i \in [1, KL]$,
    \[ Q^*_i(y_i) - Q^*_0(y_i) > 2\epsilon. \]
\end{lemma}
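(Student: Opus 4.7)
The plan is to use the closed-form expression from Lemma~\ref{lem:mdp_closed_form} and reduce the claim to a simple analytic inequality about geometric sums. Since each $y_i$ has a unique available action, $Q^*_i(y_i) = V^*_i(y_i,0)$ and $Q^*_0(y_i) = V^*_0(y_i,0)$, so the desired inequality becomes
\begin{equation*}
\frac{1-(p+\alpha)^H}{1-(p+\alpha)} - \frac{1-p^H}{1-p} > 2\epsilon.
\end{equation*}
Writing both terms as truncated geometric series, this is equivalent to
\begin{equation*}
\sum_{t=0}^{H-1}\bigl[(p+\alpha)^t - p^t\bigr] > 2\epsilon.
\end{equation*}

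Next I would lower-bound each term using convexity of $x \mapsto x^t$: the tangent-line inequality at $x=p$ gives $(p+\alpha)^t \geq p^t + t p^{t-1}\alpha$, so
\begin{equation*}
\sum_{t=0}^{H-1}\bigl[(p+\alpha)^t - p^t\bigr] \;\geq\; \alpha\sum_{t=1}^{H-1} t\, p^{t-1} \;\geq\; \alpha\, p^{H-2}\cdot \frac{H(H-1)}{2},
\end{equation*}
where the last step uses $p^{t-1}\geq p^{H-2}$ for $1\leq t\leq H-1$ since $p\in(0,1)$.

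Finally I would substitute $p = 1 - 1/H$ and $\alpha = 40\epsilon/H^2$. A direct algebraic simplification yields
\begin{equation*}
\alpha\, p^{H-2}\cdot\frac{H(H-1)}{2} \;=\; 20\epsilon \cdot (1-1/H)^{H-1}.
\end{equation*}
Since $(1-1/H)^H$ is increasing in $H$ with $(1-1/2)^2 = 1/4$, we have $(1-1/H)^{H-1} > (1-1/H)^H \geq 1/4$ for all $H \geq 2$ (and it approaches $1/e$ in the regime $H>200$). Hence the difference strictly exceeds $20\epsilon/4 = 5\epsilon > 2\epsilon$, as required.

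The proof is essentially a tangent-line convexity bound followed by arithmetic. There is no real obstacle; the only delicate point is tracking the constants to verify that $\alpha = 40\epsilon/H^2$ is large enough to produce a gap of more than $2\epsilon$ despite the shrinking prefactor $(H-1)/H \cdot (1-1/H)^{H-2}$. The generous constant $40$ in the choice of $\alpha$ and the assumption $H>200$ leave plenty of slack, so any reasonable lower bound on $(1-1/H)^{H-1}$ closes the argument.
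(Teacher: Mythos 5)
Your proof is correct, but it takes a genuinely different route from the paper's. The paper works directly with the closed form $\frac{1-(p+\alpha)^H}{1-(p+\alpha)} - \frac{1-p^H}{1-p}$, combines the terms over a common denominator, upper-bounds $(1-p)\left((p+\alpha)^H - p^H\right)$ by $\alpha(p+\alpha)^{H-1}$, and then invokes a numerically verified monotonicity fact, namely that $\left(1 - \tfrac{1}{H} + \tfrac{40}{H^2}\right)^{H-1} < \tfrac{1.25}{e}$ for $H > 200$, to conclude the numerator is at least $\alpha\left(1 - \tfrac{2.5}{e}\right) > \tfrac{\alpha}{20}$, which lands exactly at $2\epsilon$. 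You instead rewrite the difference as the geometric sum $\sum_{t=0}^{H-1}\left[(p+\alpha)^t - p^t\right]$, apply the tangent-line bound $(p+\alpha)^t \geq p^t + t\alpha p^{t-1}$, and use only the elementary monotonicity of $(1-1/n)^n$. Your route is cleaner and strictly stronger in two respects: it does not need $H > 200$ at all (any $H \geq 2$ suffices), and it yields a gap of $20\epsilon\,(1-1/H)^{H-1} > 5\epsilon$, leaving a comfortable margin rather than barely clearing $2\epsilon$. The paper's version hews closely to the original computation of \citet{Azar2013GenModel} and reuses the same style of estimate ($p^H < 1.25/e$) elsewhere in its Appendix, but for this lemma in isolation your argument is the more economical one. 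The only point worth stating explicitly in a final write-up is that the tangent-line inequality is an equality for $t \in \{0,1\}$ and follows from convexity of $x \mapsto x^t$ for $t \geq 2$, and that $\sum_{t=1}^{H-1} t\,p^{t-1} \geq p^{H-2}\sum_{t=1}^{H-1} t$ uses $p < 1$; both are immediate.
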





\begin{proof}
\begin{align*}
    (1-p)\left( (p+\alpha)^H - p^H \right) &= \frac{1}{H}\left( (p+\alpha)^H - p^H \right)\\
    &= \frac{1}{H} \left( (p+\alpha) - p \right) \left( (p+\alpha)^{H-1} + (p+\alpha)^{H-2}p + \ldots + p^{H-1} \right)\\
    &\leq \frac{\alpha}{H} H (p+\alpha)^{H-1} = \alpha (p+\alpha)^{H-1}. \numberthis \label{eqn:mdps_far_part1}
\end{align*}
Moreover, it can be shown that $\left(1 - \frac{1}{n} + \frac{40}{n^2} \right)^{n-1}$ is a series that decreases monotonically for $n > 5$ and converges to $\frac{1}{e}$. Evaluating for $n=200$ gives us that $\left(1 - \frac{1}{H} + \frac{40}{H^2} \right)^{H-1} < \frac{1.25}{e} \ \forall H > 200$. Therefore,
\begin{align*}
     p^H < p^{H-1} &< (p + \alpha)^{H-1}\\
    &= \left(1 - \frac{1}{H} + \frac{40\epsilon}{H^2} \right)^{H-1}\\
    &<  \left(1 - \frac{1}{H} + \frac{40}{H^2} \right)^{H-1} < \frac{1.25}{e}.
\end{align*}
Now,
\begin{align*}
    Q_i^*(y_i) - Q_0^*(y_i) &= \frac{1 - (p + \alpha)^H}{(1 - (p + \alpha))} - \frac{1 - p^H}{1 - p}\\
    &= \dfrac{(1 - p) - (1 - p)(p + \alpha)^H - (1 - (p + \alpha)) + (1 - (p + \alpha))p^H}{(1-p)(1 - (p + \alpha))}\\
    &= \dfrac{\alpha - (1 - p)(p + \alpha)^H + (1 - (p + \alpha))p^H}{(1-p)(1 - (p + \alpha))}\\
    &= \dfrac{\alpha - (1 - p)(p + \alpha)^H + (1 - p)p^H - \alpha p^H}{(1-p)(1 - (p + \alpha))}\\
    &= \dfrac{\alpha - (1 - p)((p + \alpha)^H - p^H) - \alpha p^H}{(1-p)(1 - (p + \alpha))}\\
    &\geq \dfrac{\alpha - \alpha (p+\alpha)^{H-1} - \alpha p^H}{(1-p)(1 - (p + \alpha))}\\
    &> \dfrac{\alpha (1 - \frac{2.5}{e})}{(1-p)(1 - (p + \alpha))}\\
    &> \dfrac{\alpha \frac{1}{20}}{(1-p)(1 - (p + \alpha))}\\
    &> \dfrac{\frac{\alpha}{20}}{(1-p)^2}\\
    &= 2\epsilon.
\end{align*}
\end{proof}

\subsection{Proof of Theorem~\ref{thm:lower_bound_stationary} (Lower Bound)}
Fix $H>200$, $\epsilon<1$, $\frac{1}{2} < p = 1 - \frac{1}{H} < 1$, $0 < \alpha = \frac{40\epsilon}{H^2} < (1-p)/2$, $c_1' = 20$, $c_2' = 6$. Let random variable $\tau$ denote the sample complexity of our algorithm on the chosen instance. We prove the following statement:
\begin{align*}
    E[\tau] < &\frac{ c_1 k H^3}{\epsilon^2} \ln\left( \frac{c_2}{\delta} \right) \implies \exists M_i \in \mathcal{M}\left(1 - \frac{1}{H}, \frac{40\epsilon}{H^2}, H\right) : P_i(|Q_i - \hat{Q}| > \epsilon) > \delta.
\end{align*}

\begin{lemma}[Chernoff Hoeffding's bound for Bernoulli random variables]
    \label{lem:hoeffding_bound}
    Define $s$ as the sum of $l$ i.i.d. Bernoulli($p$) tosses $\left(p > \frac{1}{2}\right)$. Define $\theta = \exp\left( - \frac{c_1' \alpha^2 l}{p(1-p)} \right)$, $\Delta = \sqrt{2p(1-p)l \ln \frac{c_2'}{2\theta}}$, and $\mathcal{E} = \{s \leq pl + \Delta \}$. Then:
    \begin{align*}
        P(\mathcal{E}) &> 1 - \frac{2\theta}{c_2'}.
    \end{align*}
    \begin{proof}
        \begin{align*}
            P(\mathcal{E}) &>1 - \exp\left( - \dfrac{KL(p+\Delta || p)}{l} \right)\\
            &\geq 1 - \exp \left( - \dfrac{\Delta^2}{2lp(1-p)} \right)\\
            &= 1 - \exp \left( - \ln \frac{c_2'}{2\theta} \right) = 1 - \frac{2\theta}{c_2'}.
        \end{align*}
    \end{proof}
\end{lemma}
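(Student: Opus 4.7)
The plan is to derive this one-sided concentration bound by combining the classical Chernoff--Cram\'er tail bound for Bernoulli sums with a quadratic lower bound on the binary Kullback--Leibler divergence, and then substituting the specific choice of $\Delta$. Throughout, let $\mathrm{KL}(q \| p) = q \ln(q/p) + (1-q)\ln((1-q)/(1-p))$ denote the binary KL divergence.

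First, I would invoke the Chernoff--Cram\'er (Sanov-type) upper tail bound for sums of i.i.d.\ Bernoulli($p$) random variables: for any $t > 0$ with $p + t \leq 1$,
$$
\mathbb{P}\bigl(s \geq (p + t)l\bigr) \;\leq\; \exp\!\bigl(-\,l \cdot \mathrm{KL}(p + t \,\|\, p)\bigr).
$$
Specialising to $t = \Delta/l$ and complementing gives
$$
\mathbb{P}(\mathcal{E}) \;=\; \mathbb{P}(s \leq pl + \Delta) \;\geq\; 1 - \exp\!\bigl(-\,l \cdot \mathrm{KL}(p + \Delta/l \,\|\, p)\bigr).
$$

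Second, I would lower-bound the KL divergence by a quadratic tuned to the Bernoulli variance $p(1-p)$. The inequality I need is
$$
\mathrm{KL}(p + x \,\|\, p) \;\geq\; \frac{x^{2}}{2\,p(1-p)},
$$
valid when $x$ is small enough that the Taylor remainder is controlled; it can be obtained either by expanding $f(x) = \mathrm{KL}(p+x\|p)$ in $x$ and using monotonicity of $f''(x) = 1/((p+x)(1-p-x))$, or by directly invoking a reverse Pinsker-type estimate. Applied at $x = \Delta/l$, this gives $l \cdot \mathrm{KL}(p + \Delta/l \,\|\, p) \geq \Delta^{2}/(2\, l\, p(1-p))$, and hence
$$
\mathbb{P}(\mathcal{E}) \;\geq\; 1 - \exp\!\Bigl(-\tfrac{\Delta^{2}}{2\,l\,p(1-p)}\Bigr).
$$

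Third, the substitution step is purely algebraic: plugging in $\Delta = \sqrt{2\,p(1-p)\,l \,\ln(c_2'/(2\theta))}$ makes $\Delta^{2}/(2lp(1-p)) = \ln(c_2'/(2\theta))$, so the exponential collapses to $2\theta/c_2'$, yielding $\mathbb{P}(\mathcal{E}) > 1 - 2\theta/c_2'$ as claimed.

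The main obstacle is the second step: verifying the quadratic KL lower bound holds over the relevant range of $\Delta/l$. One must check that $p + \Delta/l \leq 1$ and that the higher-order terms in the Taylor expansion of $\mathrm{KL}(p + x \| p)$ are nonnegative (or at worst dominated), which is exactly where the side-conditions in the surrounding proof, namely $p > 1/2$, $\alpha < (1-p)/2$, and the sample size $l$ being large enough, get used. A cleaner but slightly looser alternative would be to apply Bernstein's inequality directly, obtaining an exponent of the form $-\Delta^{2}/\bigl(2lp(1-p) + \tfrac{2}{3}\Delta\bigr)$, and then separately check that the additive $\tfrac{2}{3}\Delta$ in the denominator is absorbed by a constant factor in $c_1'$; this trades a sharper analysis for a bit of slack in the final constants.
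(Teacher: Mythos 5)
Your proposal is correct and follows essentially the same route as the paper's proof: the Chernoff bound expressed via the binary KL divergence, the quadratic lower bound $\mathrm{KL}(p+x\,\|\,p) \geq x^{2}/(2p(1-p))$ (valid for $x \geq 0$ since $p > \tfrac{1}{2}$ makes $\xi(1-\xi)$ decreasing on $[p, p+x]$), and then the algebraic substitution of $\Delta$ to collapse the exponent to $\ln\frac{c_2'}{2\theta}$. In fact your writeup is more careful than the paper's, which writes the exponent as $\mathrm{KL}(p+\Delta\,\|\,p)/l$ rather than the intended $l \cdot \mathrm{KL}(p+\Delta/l\,\|\,p)$.
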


\begin{lemma}[Change of measure using likelihood ratios]
    \label{lem:change_of_measure}
    \begin{equation}
        P_i(E) = E_i [ \mathbf{1}_E] = E_j \left[ \frac{L_i(W)}{L_j(W)} \mathbf{1}_E \right]
    \end{equation}
    where $W$ is a sample path (for example, the actual sequence of Bernoulli tosses) that controls the event $E$, and $L_i(w) = P_i(W = w)$ is the likelihood of observing this sample path under MDP $M_i$.
\end{lemma}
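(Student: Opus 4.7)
The plan is a standard change-of-measure (importance sampling) argument. The first equality $P_i(E) = E_i[\mathbf{1}_E]$ is immediate from the definition of expectation. For the second equality, I will treat the sample space as countable (the algorithm terminates in finite time with probability one, so the execution trace takes values in a countable set), so $L_i$ and $L_j$ are probability mass functions and the proof reduces to a direct summation once absolute continuity has been established.

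First I would fix notation: a sample path $w$ records the full execution trace, namely the algorithm's internal randomness, the sequence of queries $(s,a,t)$ it issues, and the observed next states. Under both $P_i$ and $P_j$, the algorithm's internal coin tosses and its choice of query given the past have the same conditional distribution (they do not depend on the unknown MDP); the two measures differ only through the transition probabilities used to draw next states. Next I would verify absolute continuity: in the family $\mathcal{M}(1 - 1/H,\, 40\epsilon/H^2,\, H)$ every transition probability lies strictly inside $(0,1)$, so $L_i(w) > 0$ implies $L_j(w) > 0$ for every trace $w$, and the ratio $L_i(W)/L_j(W)$ is almost surely well-defined under $P_j$ on $\{L_i > 0\}$.

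With these in hand the identity follows in one line. Writing the expectation as a sum over paths with $L_j(w) > 0$,
\begin{align*}
E_j\!\left[\frac{L_i(W)}{L_j(W)}\,\mathbf{1}_E\right]
&= \sum_{w : L_j(w) > 0} L_j(w)\cdot \frac{L_i(w)}{L_j(w)}\,\mathbf{1}_E(w) \\
&= \sum_{w : L_j(w) > 0} L_i(w)\,\mathbf{1}_E(w)
= \sum_w L_i(w)\,\mathbf{1}_E(w)
= P_i(E),
\end{align*}
where the next-to-last step uses absolute continuity (no path with $L_j(w) = 0$ contributes to $\sum_w L_i(w)\mathbf{1}_E(w)$).

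This lemma is a textbook identity, so I anticipate no real technical obstacle. The only point to be careful about is the bookkeeping for adaptive, randomized algorithms: one must confirm that $L_i(w)$ and $L_j(w)$ share identical factors corresponding to the algorithm's internal coin tosses and its adaptive query selection, so that those factors cancel in the ratio and $L_i(w)/L_j(w)$ depends only on the observed transitions -- precisely the part that distinguishes $M_i$ from $M_j$. Once this factorisation is noted, the summation above completes the proof.
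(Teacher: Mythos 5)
Your proof is correct. Note, however, that the paper does not actually prove this lemma at all: it simply states ``This is a well-known result from probability, used exactly in this form by Azar et al.'' and moves on. So there is no paper proof to compare against; what you have written is a legitimate filling-in of a step the authors chose to cite rather than derive. Your argument is the standard one, and you correctly isolate the two points that make it non-trivial in this setting: (i) absolute continuity, which indeed holds here because every transition probability in the family $\mathcal{M}(1-\tfrac{1}{H}, \tfrac{40\epsilon}{H^2}, H)$ lies strictly in $(0,1)$ (one checks $\alpha < (1-p)/2$ so $p + \alpha < 1$), and the deterministic transitions are identical across all members of the family; and (ii) the factorisation of $L_i(w)$ into algorithm-dependent terms (internal randomness, adaptive query choices) and environment-dependent terms (observed next states), with only the latter differing between $M_i$ and $M_j$, so the former cancel in the ratio. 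This second point is exactly the issue that forces the authors to depart from the restricted ``independent predictions'' framework of Azar et al.\ and borrow the Mannor--Tsitsiklis machinery, so it is worth making explicit. The one technicality you wave at rather than settle is the countability of the trace space: this requires that the algorithm stops almost surely (under the relevant measure), which is implicitly assumed throughout the lower-bound argument since $\mathbb{E}[\tau]$ is taken to be finite; restricting the sum to traces that terminate, or conditioning on the stopping event, closes this gap cleanly.
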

This is a well-known result from probability, used exactly in this form by \citet{Azar2013GenModel}.

\begin{lemma}[Bound on the ratio of likelihoods]
    \label{lem:likelihood_ratio_bnd}
    If $\alpha \leq (1-p)/2$, then, under $\mathcal{E}$, we have that
    \[ \dfrac{L_i(W)}{L_0(W)} \geq \left(\frac{2\theta}{c_2'}\right)^{{\frac{2}{c_1'}} + \frac{2(1-p)}{p c_1'} + 2\sqrt{\frac{2}{c_1'}}} \geq \frac{2\theta}{c_2'}. \]
    \begin{proof}
    We first note that $\alpha \leq \frac{1-p}{2} \leq p(1-p) \leq \frac{p}{2} \leq \frac{p}{\sqrt{2}}$ and $\alpha \leq \frac{1-p}{2} \implies \alpha^2 \leq \frac{(1-p)}{2}\frac{(1-p)}{2} \leq \frac{(1-p)}{2}\frac{1}{2} \leq \frac{p(1-p)}{2}$. Hence, $\alpha \leq (1-p)/2$ is a sufficient condition for the statements made in Eqn. \ref{eqn:likelihood_ratio_bnd_1}, Eqn. \ref{eqn:likelihood_ratio_bnd_2}, Eqn. \ref{eqn:likelihood_ratio_bnd_3}, Eqn. \ref{eqn:likelihood_ratio_bnd_4} and Eqn. \ref{eqn:likelihood_ratio_bnd_5} (derived below). The first inequality simply follows by combining these equations. The second inequality follows by observing that we chose $p>\frac{1}{2}$, $c_1' = 20$, $c_2' = 6$, which imply that $\frac{2}{c_1'} + \frac{2(1-p)}{p c_1'} + 2\sqrt{\frac{2}{c_1'}} \leq 1$ and that $\frac{2\theta}{c_2'} < 1$.

    Let $s$ be the sum of $l$ Bernoulli tosses. But there are two Bernoullis - $p$ and $p + \alpha$. Let $W$ be some sequence of tosses such that their sum was $l$. So we have:
    \begin{align*}
        \dfrac{L_i(W)}{L_0(W)} &= \dfrac{(p+\alpha)^s (1 -  p - \alpha)^{l-s}}{p^s (1-p)^{l-s}}\\
        &= \left(1 + \frac{\alpha}{p}\right)^s \left(1 - \frac{\alpha}{1-p}\right)^{l-s}\\
        &= \left(1 + \frac{\alpha}{p}\right)^s \left(1 - \frac{\alpha}{1-p}\right)^{l-\frac{s}{p}} \left(1 - \frac{\alpha}{1-p}\right)^{s\frac{1-p}{p}}.
      \end{align*}
    
    Now, if $\alpha < (1-p)/2$,
    \begin{align*}
        \left(1 - \frac{\alpha}{1-p}\right)^{\frac{1-p}{p}} &= \exp \left( \ln \left( \left(1 - \frac{\alpha}{1-p}\right)^{\frac{1-p}{p}} \right) \right)\\
        &= \exp \left( \frac{1-p}{p} \ln \left( \left(1 - \frac{\alpha}{1-p}\right) \right) \right)\\
        (a)&\geq \exp \left( \frac{1-p}{p} \left( - \frac{\alpha}{1-p} - \left( 
    \frac{\alpha}{1-p} \right)^2 \right) \right)\\
        &= \exp \left( - \left( \frac{\alpha}{p} + 
    \frac{\alpha^2}{p (1-p)} \right) \right)\\
        &= \exp \left( - \frac{\alpha}{p} \right) \exp \left( -
    \frac{\alpha^2}{p (1-p)} \right)\\
        (b)&\geq \left( 1 - \frac{\alpha}{p} \right) \left( 1 -
    \frac{\alpha^2}{p (1-p)} \right) \numberthis \label{eqn:likelihood_ratio_bnd_1}
    \end{align*}
    where (a) follows from $\ln(1 - u) \geq -u - u^2$ for $0 \leq u \leq 1/2$. Note that $\alpha < (1-p)/2 \implies \frac{\alpha}{1-p} \leq 1/2$. (b) follows from $\exp(-u) \geq 1 -u$ for $0 \leq u \leq 1$, and the observation that $\alpha < (1-p)/2 < 1-p < p \implies \frac{\alpha}{p} \leq 1$. 
    
    Hence, 
    \begin{align*}
        \dfrac{L_i(W)}{L_0(W)} &= \left(1 + \frac{\alpha}{p}\right)^s \left(1 - \frac{\alpha}{1-p}\right)^{l-\frac{s}{p}} \left(1 - \frac{\alpha}{1-p}\right)^{s\frac{1-p}{p}}\\
        &\geq \left(1 + \frac{\alpha}{p}\right)^s \left(1 - \frac{\alpha}{1-p}\right)^{l-\frac{s}{p}} \left( 1 - \frac{\alpha}{p} \right)^s \left( 1 - \frac{\alpha^2}{p (1-p)} \right)^s\\
        &= \left(1 - \frac{\alpha^2}{p^2}\right)^s \left(1 - \frac{\alpha}{1-p}\right)^{l-\frac{s}{p}} \left( 1 - \frac{\alpha^2}{p (1-p)} \right)^s\\
        &\geq \left(1 - \frac{\alpha^2}{p^2}\right)^l \left(1 - \frac{\alpha}{1-p}\right)^{l-\frac{s}{p}} \left( 1 - \frac{\alpha^2}{p (1-p)} \right)^l. \numberthis \label{eqn:likelihood_ratio_bnd_2}
    \end{align*}
    
    For $\alpha^2\leq p^2/2$ and $\alpha^2 \leq p(1-p)/2$, we have:
    \begin{align*}
        \left(1 - \frac{\alpha^2}{p^2}\right)^l &= \exp \left( \ln \left( \left(1 - \frac{\alpha^2}{p^2}\right)^l \right) \right)\\
        &= \exp \left( l \ln \left( 1 - \frac{\alpha^2}{p^2} \right) \right)\\
        (a)&\geq \exp \left( -2l \frac{\alpha^2}{p^2} \right)\\
        &\geq \left(\frac{2\theta}{c_2'}\right)^{\frac{2(1-p)}{p c_1'}} \numberthis \label{eqn:likelihood_ratio_bnd_3}
    \end{align*}
    where (a) follows from $\ln(1-u) \geq -2u$ for $0 \leq u \leq 1/2$, and the final step follows from the definition of $\theta$ and choice of $c_2'$:
    \begin{align}
        \exp \left( -2 \frac{l\alpha^2}{p(1-p)} \right) &= \theta ^ \frac{2}{c_1'} \geq \left(\frac{2\theta}{c_2'}\right)^{\frac{2}{c_1'}}  \label{eqn:theta_defn_bound_1} \\
        \exp \left( -2 \frac{l\alpha^2}{p^2} \right) &= \theta ^ \frac{2(1-p)}{c_1' p} \geq \left(\frac{2\theta}{c_2'}\right)^{\frac{2(1-p)}{p c_1'}} \label{eqn:theta_defn_bound_2}
    \end{align}
    Similarly,
    \[ \left( 1 - \frac{\alpha^2}{p (1-p)} \right)^l \geq \left(\frac{2\theta}{c_2'}\right)^{\frac{2}{c_1'}}. \numberthis \label{eqn:likelihood_ratio_bnd_4} \]
    
    Finally, under $\mathcal{E}$, and for $\alpha \leq (1-p)/2$, we have:
    \begin{align*}
        l - \frac{s}{p} &\leq \frac{\Delta}{p}. \text{ Therefore}\\
        \left(1 - \frac{\alpha}{1-p}\right)^{l-\frac{s}{p}} &\geq \exp \left( -2\frac{\Delta}{p} \frac{\alpha}{1-p} \right)\\
        &= \exp \left( -2 \frac{\alpha \sqrt{2p(1-p)l \log \frac{c_2'}{2\theta}} }{p (1-p)} \right)\\
        &= \exp \left( -2 \sqrt{\frac{2l\alpha^2}{p(1-p)}\log \frac{c_2'}{2\theta}} \right)\\
        &\geq  \exp \left( -2 \sqrt{ - \log \left( \frac{2\theta}{c_2'}^{\frac{2}{c_1'}} \right) \log \frac{c_2'}{2\theta}} \right)\\
        &= \exp \left( -2 \sqrt{ \frac{2}{c_1'} \log \frac{c_2'}{2\theta} \log \frac{c_2'}{2\theta}} \right)\\
        &= \log \left( \left(\frac{2\theta}{c_2'}\right)^{2\sqrt{\frac{2}{c_1'}}} \right). \numberthis \label{eqn:likelihood_ratio_bnd_5}
    \end{align*}
    

    \end{proof}
\end{lemma}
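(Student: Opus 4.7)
The plan is to expand the likelihood ratio in closed form and then lower-bound it piece by piece, separating the deterministic part (which depends on $p$, $\alpha$, $l$) from the stochastic part (which depends on the realised count $s$), and leaning on $\mathcal{E}$ only to control the stochastic part. First I would write $s$ for the number of successes recorded in the sample path $W$ of the Bernoulli transitions from the critical state, so that
\[
\frac{L_i(W)}{L_0(W)}=\frac{(p+\alpha)^s(1-p-\alpha)^{l-s}}{p^s(1-p)^{l-s}}=\left(1+\frac{\alpha}{p}\right)^{\!s}\left(1-\frac{\alpha}{1-p}\right)^{\!l-s}.
\]
The key algebraic trick I would use is to split the second factor as $(1-\alpha/(1-p))^{l-s/p}\cdot(1-\alpha/(1-p))^{s(1-p)/p}$. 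The first of these two pieces concentrates all the sensitivity of the ratio to the random deviation of $s$ from its mean $pl$, which is exactly what $\mathcal{E}$ controls; the second pairs with $(1+\alpha/p)^{s}$ to form a product that can be bounded independently of $s$.

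Next I would apply the standard inequality $\ln(1-u)\geq -u-u^2$, valid for $u\in[0,1/2]$, to get $(1-\alpha/(1-p))^{(1-p)/p}\geq(1-\alpha/p)(1-\alpha^2/(p(1-p)))$. The precondition $\alpha\leq(1-p)/2$ ensures $\alpha/(1-p)\leq 1/2$, and since $p>1/2$ it also gives $\alpha/p\leq 1$ and $\alpha^2/p^2,\alpha^2/(p(1-p))\leq 1/2$, so all the $u$'s stay in the required range. Raising to the $s$-th power and combining with $(1+\alpha/p)^{s}$ produces $(1-\alpha^2/p^2)^{s}(1-\alpha^2/(p(1-p)))^{s}$; since both bases lie in $(1/2,1)$, I can further lower-bound by replacing $s$ with $l$. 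Applying $\ln(1-u)\geq -2u$ on $u\in[0,1/2]$ converts each of these into $\exp(-2l\alpha^2/p^2)$ and $\exp(-2l\alpha^2/(p(1-p)))$, which by the definition $\theta=\exp(-c_1'\alpha^2 l/(p(1-p)))$ become $\theta^{2(1-p)/(pc_1')}$ and $\theta^{2/c_1'}$. A trivial slack (since $\theta\leq 1\leq c_2'/2$) converts each of these into $(2\theta/c_2')^{2(1-p)/(pc_1')}$ and $(2\theta/c_2')^{2/c_1'}$.

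For the remaining factor $(1-\alpha/(1-p))^{l-s/p}$ I would invoke $\mathcal{E}$, which gives $l-s/p\leq\Delta/p$, and again use $\ln(1-u)\geq -2u$ to get
\[
\left(1-\frac{\alpha}{1-p}\right)^{\!l-s/p}\geq\exp\!\left(-\frac{2\Delta\alpha}{p(1-p)}\right)=\exp\!\left(-2\sqrt{\frac{2l\alpha^2}{p(1-p)}\,\ln\frac{c_2'}{2\theta}}\right),
\]
where the second equality plugs in the definition of $\Delta$. Substituting the definition of $\theta$ inside the square root turns $2l\alpha^2/(p(1-p))$ into $(2/c_1')\ln(c_2'/(2\theta))$ (up to the $\theta$-to-$2\theta/c_2'$ slack), and the factor becomes $(2\theta/c_2')^{2\sqrt{2/c_1'}}$. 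Multiplying the three pieces yields the first inequality, with exponent $2/c_1'+2(1-p)/(pc_1')+2\sqrt{2/c_1'}$. The second inequality then reduces to checking that with $c_1'=20$, $c_2'=6$, and $p>1/2$ this exponent is at most $1$ while $2\theta/c_2'<1$, both easy arithmetic checks.

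The main obstacle I expect is bookkeeping on the directions of inequalities: because all bases lie in $(0,1)$, increasing an exponent decreases the factor, so I must be careful about whether I am upper- or lower-bounding each exponent. A related subtlety is the sign of $l-s/p$: under $\mathcal{E}$ it is bounded above by $\Delta/p$ but may well be negative, in which case the corresponding factor already exceeds $1$ and the exponential lower bound is trivially valid. I would therefore make sure the chain of inequalities is written in a form that does not depend on the sign of this quantity, and I would explicitly verify once and for all that the single hypothesis $\alpha\leq(1-p)/2$ together with $p>1/2$ implies every range constraint needed by the logarithm inequalities used in the argument.
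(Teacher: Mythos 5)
Your proposal is correct and follows essentially the same route as the paper's own proof: the same expansion of the likelihood ratio, the same split of the exponent $l-s$ into $l-\frac{s}{p}$ and $s\frac{1-p}{p}$, the same two logarithm inequalities $\ln(1-u)\geq -u-u^{2}$ and $\ln(1-u)\geq -2u$, the same conversion to powers of $\frac{2\theta}{c_2'}$ via the definitions of $\theta$ and $\Delta$, and the same closing arithmetic on the exponent using $c_1'=20$, $c_2'=6$, $p>\frac{1}{2}$. The one point worth flagging is shared with the paper rather than introduced by you: the step $l-\frac{s}{p}\leq\frac{\Delta}{p}$ is equivalent to $s\geq pl-\Delta$, i.e.\ it requires $\mathcal{E}$ to be a lower-tail event on $s$, whereas Lemma~\ref{lem:hoeffding_bound} literally defines $\mathcal{E}=\{s\leq pl+\Delta\}$; your proof inherits this inconsistency from the paper, and it is repaired by taking $\mathcal{E}=\{s\geq pl-\Delta\}$ (the Chernoff bound holds for this tail just as well).
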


\begin{lemma}
    \label{lem:each_state_must_be_sampled}
    If an algorithm is $(\epsilon, \delta)$-PAC, then,
    \begin{align}
        E[\tau_i] > \tau_i^*  \eqdef \frac{H^3}{64000 \epsilon^2}\log \left( \frac{1}{6\delta} \right),
    \end{align}
where random variable $\tau_{i}$ denotes the number of samples of state-action pair $i$.
\end{lemma}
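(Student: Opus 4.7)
The plan is to apply a Lai--Robbins/Mannor--Tsitsiklis style change-of-measure argument, comparing the base MDP $M_0$ with the perturbed MDP $M_i$, which differ only in the self-loop probability at state $y_i$. Fix a state-action pair $i$ in $Y^1$, and assume for contradiction that $E_0[\tau_i] \leq \tau_i^{*}$ under the given PAC algorithm $\mathcal{L}$. Without loss of generality (by augmenting $\mathcal{L}$ with ``dummy'' draws at $(y_i, \text{only action})$ after its natural stopping time, which cannot hurt PAC correctness), I can assume the algorithm draws exactly $l \eqdef 4\tau_i^{*}$ Bernoulli trials at state-action pair $i$, so Lemmas~\ref{lem:hoeffding_bound} and~\ref{lem:likelihood_ratio_bnd} apply with this fixed $l$.

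First I would invoke Markov's inequality on $\tau_i$ under $P_0$ to get $P_0(\mathcal{F}) \geq 3/4$ for $\mathcal{F} \eqdef \{\tau_i \leq l\}$, and then apply Lemma~\ref{lem:hoeffding_bound} with this $l$ to obtain the event $\mathcal{E}$ with $P_0(\mathcal{E}) > 1 - 2\theta/c_2'$, where $\theta = \exp(-c_1' \alpha^2 l / (p(1-p)))$, $\alpha = 40\epsilon/H^2$, $p = 1 - 1/H$. Next I would introduce the ``$M_0$-accurate'' event
$$E \eqdef \{|\hat{Q}(y_i) - Q_0^{\star}(y_i)| \leq \epsilon\}.$$
The PAC guarantee on $M_0$ gives $P_0(E) \geq 1 - \delta$; by Lemma~\ref{lem:MDP_distance_two_eps}, the gap $Q_i^{\star}(y_i) - Q_0^{\star}(y_i) > 2\epsilon$ forces $E$ to imply $|\hat{Q}(y_i) - Q_i^{\star}(y_i)| > \epsilon$, so the PAC guarantee on $M_i$ gives $P_i(E) \leq \delta$.

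Now I would combine everything using the change of measure of Lemma~\ref{lem:change_of_measure} restricted to $E \cap \mathcal{F} \cap \mathcal{E}$, together with the likelihood-ratio bound $L_i(W)/L_0(W) \geq 2\theta/c_2'$ on $\mathcal{F} \cap \mathcal{E}$ from Lemma~\ref{lem:likelihood_ratio_bnd}, to deduce
$$\delta \;\geq\; P_i(E) \;\geq\; \frac{2\theta}{c_2'}\, P_0(E \cap \mathcal{F} \cap \mathcal{E}) \;\geq\; \frac{2\theta}{c_2'}\left(\tfrac{3}{4} - \delta - \tfrac{2\theta}{c_2'}\right),$$
where the last step uses a union bound across $E^c$, $\mathcal{F}^c$, and $\mathcal{E}^c$. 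Choosing the free parameter $\theta$ so that $2\theta/c_2'$ is a fixed small multiple of $\delta$ (for $c_2' = 6$, taking $\theta = 6\delta$ aligns with the target form $\log(1/(6\delta))$) makes the right-hand side strictly larger than $\delta$ for any $\delta < 1/2$, producing a contradiction. Inverting the relation $\theta = \exp(-c_1' \alpha^2 l / (p(1-p)))$ with $\alpha = 40\epsilon/H^2$, $p(1-p) = (H-1)/H^2$, and $c_1' = 20$ yields $l \gtrsim H^3 \log(1/(6\delta))/\epsilon^2$ up to a numerical constant, and unpacking $l = 4\tau_i^{*}$ gives the stated $\tau_i^{*} = H^3 \log(1/(6\delta))/(64000\,\epsilon^2)$.

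The main obstacle I anticipate is bookkeeping the constants so that the final expression matches $64000$ and $\log(1/(6\delta))$ exactly; this requires carefully choosing the Markov threshold (I used $4$, but any constant strictly greater than $1/(3/4 - \delta)$ works), and the multiplier relating $\theta$ to $\delta$. A secondary subtlety, but one that the augmentation resolves, is that $\tau_i$ under the original (adaptive) algorithm is random and may differ in distribution under $P_0$ vs.\ $P_i$; forcing exactly $l$ draws at $(y_i, \cdot)$ lets the likelihood ratio in Lemma~\ref{lem:likelihood_ratio_bnd} be a well-defined function of an $l$-length i.i.d.\ Bernoulli sample path on a common probability space, while preserving $E$ as a function of the algorithm's original output.
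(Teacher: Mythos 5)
Your proposal is correct and follows essentially the same route as the paper's proof: Markov's inequality on $\tau_i$, the Chernoff event $\mathcal{E}$ from Lemma~\ref{lem:hoeffding_bound}, the likelihood-ratio lower bound of Lemma~\ref{lem:likelihood_ratio_bnd}, a change of measure restricted to the intersection of these events with the ``$M_0$-accurate'' event, and the $2\epsilon$ separation of Lemma~\ref{lem:MDP_distance_two_eps} to reach a contradiction with the PAC guarantee on $M_i$. The only differences are cosmetic (a Markov threshold of $4$ versus the paper's $10$, and your union bound on $P_0(E \cap \mathcal{F} \cap \mathcal{E})$, which is actually cleaner than the paper's product form); as you anticipate, the constants need retuning---e.g.\ with $\theta = 6\delta$ your final inequality only contradicts $\delta < 1/12$ rather than all $\delta < 1/2$---but this does not affect the validity of the approach.
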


\begin{proof}
    We choose $p = 1 - \frac{1}{H}$, $H > 200$, $\epsilon < 1$ and $\alpha = 40\epsilon (1-p)^2$ and construct the MDPs $M_i(p, \alpha, H), M_0(p, \alpha, H)$. Note that this choice gives us 
    \begin{align*}
        \alpha &= \frac{40 \epsilon}{H^2} < \frac{40}{H^2} < \frac{1}{2H} = \frac{1 - p}{2}.
    \end{align*}

    Now let us assume that there is some $(\epsilon, \delta)$-PAC algorithm $\mathcal{L}$ with
    \[ E[\tau_i] \leq \tau_i^*. \] We show that this leads to a contradiction.

    First we bound $\theta$ as follows:
    \begin{align*}
        \theta &= \exp\left( - \frac{c_1' \alpha^2 \tau_i}{p(1-p)} \right)\\
        &= \exp\left( - \frac{c_1' 1600 \epsilon^2 (1 - p)^4 \tau_i}{p(1-p)} \right)\\
        &> \exp\left( - \frac{c_1' 3200 \epsilon^2 \tau_i}{H^3} \right) (\because p > 1/2).
    \end{align*}
    This gives us (by applying Markov's inequality):
    \[ \tau_i \leq 10\tau_i^* \implies \frac{\theta}{c_2'} > \delta. \]
    
    Let $E_i = \{ | Q_i^* - \hat{Q} | \leq \epsilon \}$.
    \[ \therefore P_0(E_0^c) \leq \delta, P_i(E_i^c) \leq \delta, \] by the assumption that $\mathcal{L}$ is $(\epsilon, \delta)$-PAC. Further, let $ E_i' = E_0 \cap \mathcal{E} \cap \left\{ \tau_i \leq 10\tau_i^* \right\}$.
        \begin{align*}
         \therefore P_0(E_i') &\geq P_0(E_0)P_0(\mathcal{E})P_0\left( \left\{ \tau_i \leq 10\tau_i^* \right\} \right)\\
         &\geq (1 - \delta) \left(1 - \frac{2\theta}{c_2'}\right) \frac{9}{10} \geq  \left(1 - \frac{\theta}{c_2'}\right) \left(1 - \frac{2\theta}{c_2'}\right) \frac{9}{10}\\
         &= \frac{5}{6}\frac{4}{6}\frac{9}{10} = \frac{1}{2},
    \end{align*} where we invoke Lemma \ref{lem:hoeffding_bound} to bound $P_0(\mathcal{E})$, Markov's inequality to bound $P_0\left( \left\{ \tau_i \leq 10\tau_i^* \right\} \right)$ and the fact that $\theta < 1$ by definition.
    Then, using Lemma~\ref{lem:likelihood_ratio_bnd} and Lemma~\ref{lem:change_of_measure} gives us:
     \begin{align*}
        P_i(E_0) &\geq P_i(E_i') = E_0\left[ \frac{L_i(W)}{L_0(W)} \mathbf{1}_{E_i'} \right]\\
        &\geq E_0\left[ \frac{2\theta}{c_2'} \mathbf{1}_{E_i'} \right] \geq E_0\left[ 2\delta \mathbf{1}_{E_i'} \right]\\
        &= 2\delta P_0(E_i') \geq \delta.
     \end{align*}
    
    Using Lemma~(\ref{lem:MDP_distance_two_eps}) gives us:
    \begin{align*}
        E_0 \subset E_i^c &\implies P_i(E_i^c) > P_i(E_0) \geq \delta,
    \end{align*}
    which contradicts that $\mathcal{L}$ is $(\epsilon, \delta)$-PAC.
    
\end{proof}

By constructing the family $\mathcal{M}(p, \alpha, H)$, we can extend Lemma~\ref{lem:each_state_must_be_sampled} to each $\tau_i$ and hence obtain a lower bound on the number of samples any $(\epsilon, \delta)$-PAC algorithm must observe. Note that $|\mathcal{M}| = KL = \frac{k}{3}$.

\end{document}